\newcommand{\myparagraph}[1]{\par\noindent\textbf{{#1}.}} %
\newcounter{arxiv}
\newcommand \Xcal {\mathcal X}
\newcommand \Fcal {\mathcal F}
\newcommand \Ccal {\mathcal C}
\newcommand \xv {\bm{x}}
\newcommand \e {e}
\newtheorem{theorem}{Theorem}
\newtheorem{lemma}[theorem]{Lemma}
\newtheorem{proposition}[theorem]{Proposition}
\theoremstyle{definition}
\newcommand{\kl}{{\mathrm{KL}}}
\newcommand \eps \varepsilon
\renewcommand \epsilon \varepsilon
\newcommand \argmin {\operatorname*{arg\,min}} %
\newcommand \reals {\mathbb{R}}
\newcommand \expect {\mathbb{E}}
\definecolor{cadmiumgreen}{rgb}{0.0, 0.42, 0.24}
\definecolor{oldmauve}{rgb}{0.4, 0.19, 0.28}
\definecolor{royalazure}{rgb}{0.0, 0.22, 0.66}
\definecolor{harvardcrimson}{rgb}{0.79, 0.0, 0.09}
\definecolor{lightmauve}{rgb}{0.86, 0.82, 1.0}
\definecolor{darkbrown}{rgb}{0.4, 0.26, 0.13}
\definecolor{azure}{rgb}{0.0, 0.5, 1.0}
\newcommand\footnoteref[1]{\protected@xdef\@thefnmark{\ref{#1}}\@footnotemark}
\newcommand{\name}{{\fontfamily{bch}\selectfont{\textsc{Mauve}}}\xspace}
\newcommand{\tabemph}[1]{\cellcolor{lightmauve!30}\textcolor{black!50!royalazure}{#1}}%
\definecolor{Code}{rgb}{0,0,0} 
\definecolor{Decorators}{rgb}{0.5,0.5,0.5} 
\definecolor{Numbers}{rgb}{0.5,0,0} 
\definecolor{MatchingBrackets}{rgb}{0.25,0.5,0.5} 
\definecolor{Keywords}{rgb}{0,0,1} 
\definecolor{self}{rgb}{0,0,0} 
\definecolor{Strings}{rgb}{0,0.63,0} 
\definecolor{Comments}{rgb}{0.63,0,0} 
\definecolor{Backquotes}{rgb}{0,0,0} 
\definecolor{Classname}{rgb}{0,0,0} 
\definecolor{FunctionName}{rgb}{0,0,0} 
\definecolor{Operators}{rgb}{0,0,0} 
\definecolor{Background}{rgb}{0.99,0.99,0.99} 
\small\setstretch{1}, 
 \title{\name: Measuring the Gap \\ Between Neural Text and Human Text \\ using Divergence Frontiers}
\author{
Krishna Pillutla$^{1}$\quad 
Swabha Swayamdipta$^{2}$\quad 
Rowan Zellers$^{1}$\quad
John Thickstun$^{3}$\quad \\
\textbf{Sean Welleck}$^{1,2}$ \quad
\textbf{Yejin Choi}$^{1,2}$\quad
\textbf{Zaid Harchaoui}$^{4}$
\vspace{0.3cm}\\
  $^{1}$Paul G.\ Allen School of Computer Science \& Engineering, University of Washington \\
  $^{2}$Allen Institute for Artificial Intelligence\\
  $^{3}$Department of Computer Science, Stanford University \\
  $^{4}$Department of Statistics, University of Washington
}
\begin{document}
\maketitle
\doparttoc %
\faketableofcontents %

\begin{abstract}
As major progress is made in open-ended text generation, measuring how close machine-generated text is to human language remains a critical open problem.
We introduce \name, a comparison measure for open-ended text generation, which directly compares the learnt distribution from a text generation model to the distribution of human-written text using divergence frontiers. 
\name scales up to modern text generation models by computing information divergences in a quantized embedding space.
Through an extensive empirical study on three open-ended generation tasks, we find that \name identifies known properties of generated text, scales naturally with model size, and correlates with human judgments, with fewer restrictions than existing distributional evaluation metrics.
\end{abstract} %
\section{Introduction}
\label{sec:intro}
Recent large-scale text generation models show an ability to produce human-like text of remarkable quality and coherence in open-ended generation  \cite{radford2019language,zellers2019grover,brown2020language}.
In this setting, a text generation model forms a distribution over natural language sequences, induced by an autoregressive neural sequence model (e.g., GPT-3 \citep{brown2020language}) paired with a decoding algorithm (e.g., nucleus sampling \citep{holtzman2019curious}).
Generating text amounts to sampling from this distribution, with the goal of obtaining samples that resemble those from the ``true'' distribution of human-written text.

To evaluate how close a generation model's distribution is to that of human-written text, we must consider two types of errors: (I) where the model assigns high probability to sequences which do {\em not} resemble human-written text, and, (II) where the model distribution does not cover the human distribution, i.e., it fails to yield diverse samples.
However, quantifying these aspects in a principled yet computationally tractable manner is challenging, as the text distributions are high-dimensional and discrete, accessed only through samples or expensive model evaluations
\citep{holtzman2019curious,welleck2020consistency,zhang2020trading}.

We develop \name, 
a comparison measure for open-ended text generation.
The proposed measure is efficient, interpretable, and practical for evaluating modern text generation models. 
It captures both types of errors (Figure~\ref{fig:main:illustration})
by building upon \textit{information divergence frontiers} \citep{sajjadi2018assessing,kynknniemi2019improved,djolonga2020precision}, so far underexplored in natural language processing.
The key idea for making the proposed measure computationally tractable, yet effective, is to reduce its measurement to computing Kullback-Leibler divergences in a quantized, low-dimensional space after embedding samples from each distribution with an external language model.
From an end-user's perspective, \name has a simple interface:
given neural text and human text, it yields a scalar measure of the gap between them.

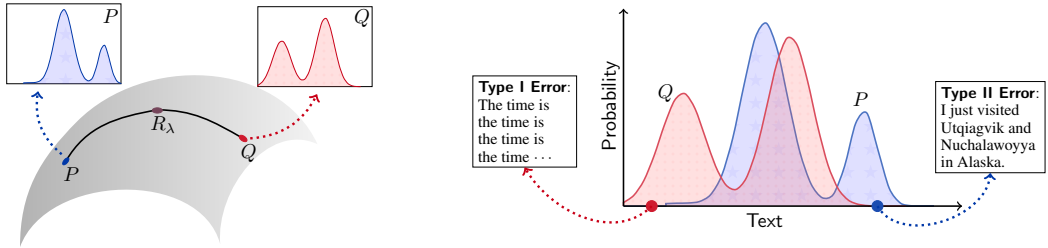
\begin{figure}[t]
    \centering
    \adjustbox{max width=0.37\textwidth}{%
\begin{tikzpicture}

\shade[left color=gray!60,right color=gray!10] 
  (0,0.5) to[out=70,in=105] (3.5, 0.55) to[out=95,in=155] (5.2, 2.3) to[out=115,in=45] (1, 2.7) to[out=-135,in=90] (0, 0.5);
\draw[thick] (0.8, 2) 
    to[out=60 , in=190] (2.4, 2.9)
    to[out=0,in=150] (3.9, 2.4);

\draw[color=royalazure, fill,rotate around={50:(0.8,2)}] (0.8,2) ellipse (2.1pt and 1pt) node[label={[shift={(0.1,0.15)},text=black]270:{$P$}}] (p_manif) {};
\draw[color=harvardcrimson!90, fill,rotate around={-35:(3.9, 2.4)}] (3.9, 2.4) ellipse (2.2pt and 1.2pt) node[label={[shift={(0.1,0.15)}, text=black]270:$Q$}] (q_manif) {};

\draw[color=oldmauve!90, fill] (2.4, 2.89) ellipse (2.5pt and 1.3pt) node[label={[shift={(0.1,0.15)},text=black]272.5:$R_\lambda$}] (r_manif) {};

\node[above = 1.1cm of p_manif](pbox) {
    \begin{tikzpicture}[scale=2]
    \draw (0.001, 0.001) rectangle (1,0.7);
    \def\normalmixP{\x, {
    0.01 * exp( -((\x-0.19)^2)/ (2 * 0.08 * 0.08) ) + 
    0.65 * exp( -((\x-0.5)^2)/ (2 * 0.08 * 0.08) ) + 
    0.34 * exp( -((\x-0.85)^2)/ (2 * 0.05 * 0.05) )
    }}
    \draw[color=royalazure, preaction={fill=blue!60, fill opacity=0.2}, fill opacity=0.2, pattern=fivepointed stars, pattern color=blue, domain=0.15:1,smooth] (0.15, 0) -- plot (\normalmixP) -- (1, 0);
    \node at (0.91, 0.47) {$P$};
    \end{tikzpicture}
};

\node[above right= 0.65cm and 0cm of q_manif](qbox) {
\begin{tikzpicture}[scale=2]
    \draw (0.001, 0.001) rectangle (1,0.7);
    \def\normalmixQ{\x, {
    0.4 * exp( -((\x-0.21)^2)/ (2 * 0.08 * 0.08) ) + 
    0.6 * exp( -((\x-0.59)^2)/ (2 * 0.08 * 0.08) )
    }}
    \draw[color=harvardcrimson, preaction={fill=red!60, fill opacity=0.2}, fill opacity=0.2, pattern=dots, pattern color=red, domain=0:0.9,smooth] (0, 0) -- plot (\normalmixQ) -- (0.9, 0);
    \node at (0.78, 0.46) {$Q$};
    \end{tikzpicture}
};

\draw[->, bend left, dotted, very thick, color=royalazure] (p_manif) to (pbox) ;
\draw[->, bend right, dotted, very thick, color=harvardcrimson] (q_manif) to (qbox.270) ;
\end{tikzpicture} %
}
    \hfill
    \adjustbox{max width=0.55\textwidth}{%
\begin{tikzpicture}[scale=5]
\def\normalmixQ{\x, {
0.4 * exp( -((\x-0.21)^2)/ (2 * 0.08 * 0.08) ) + 
0.6 * exp( -((\x-0.59)^2)/ (2 * 0.08 * 0.08) )
}}
\def\normalmixP{\x, {
0.01 * exp( -((\x-0.19)^2)/ (2 * 0.08 * 0.08) ) + 
0.65 * exp( -((\x-0.5)^2)/ (2 * 0.08 * 0.08) ) + 
0.34 * exp( -((\x-0.85)^2)/ (2 * 0.05 * 0.05) )
}}

\draw[thick,color=royalazure!70, preaction={fill=blue!60, fill opacity=0.2}, fill opacity=0.2, pattern=fivepointed stars, pattern color=blue, domain=0.15:1.1,smooth] (0, 0) -- (0.15, 0) -- plot (\normalmixP) -- (1, 0);
\draw[thick,color=harvardcrimson!70, preaction={fill=red!60, fill opacity=0.2}, fill opacity=0.2, pattern=dots, pattern color=red, domain=0:0.87,smooth] (0, 0) -- plot (\normalmixQ) -- (0.87, 0) -- (1, 0);

\draw[thick,->] (0,0) -- (1.2,0) node[below, xshift=-3.5cm] {\fontfamily{cmss}\selectfont Text};
\draw[thick,->] (0,0) -- (0,0.7) node[left, rotate=90, xshift=-0.8cm, yshift=0.23cm] {\fontfamily{cmss}\selectfont Probability};

\node(plabel) at (0.84, 0.38) {$P$};
\node(qlabel) at (0.15, 0.4) {$Q$};

\node[draw, align=left, xshift=-2.5cm, yshift=-0.5cm, at=(qlabel), scale=0.8](q_ex_text) {
    \fontfamily{cmss}\selectfont
    \textbf{Type I Error}: \\ 
    The time is \\ the time is \\ the time is \\ the time $\cdots$
};

\node[draw, align=left, xshift=2.3cm, yshift=-0.5cm, at=(plabel), scale=0.8](p_ex_text) {
    \fontfamily{cmss}\selectfont
    \textbf{Type II Error}: \\ 
    I just visited  \\
    Utqiagvik and \\
    Nuchalawoyya \\
    in Alaska.
};

\draw[color=harvardcrimson!90, fill] (0.1, 0) circle (0.55pt) {};
\draw[color=royalazure!90, fill] (0.9, 0) circle (0.55pt)  {};

\draw[->, dotted, very thick, color=harvardcrimson] (0.1, 0) to[out=225, in=300] (q_ex_text.270) {};

\draw[->, dotted, very thick, color=royalazure] (0.9, 0) to[out=300, in=265] (p_ex_text.270) {};

\end{tikzpicture}
}
    \caption{
    \small
    \textbf{Left}:
    \name compares the machine text distribution $Q$ to 
    that of human text $P$ by using the family of mixtures $R_\lambda = \lambda P + (1-\lambda) Q$ for $\lambda \in (0, 1)$. 
    \textbf{Right}: Illustration of \textit{Type I errors}, where $Q$ produces degenerate, repetitive text which is unlikely under $P$, and, \textit{Type II errors}, where $Q$ cannot produce plausible human text due to truncation heuristics~\cite{holtzman2019curious}.
    \name measures these errors softly, by using the mixture distribution $R_\lambda$. Varying $\lambda$ in $(0, 1)$ gives a divergence curve and captures a spectrum of soft Type I and Type II errors. 
    \name summarizes the entire divergence curve in a single scalar as the area under this curve.
    }
    \label{fig:main:illustration}
\end{figure}

We summarize our contributions.
First, we introduce \name, a comparison measure between neural text and human text. 
Second, we empirically show that \name is able to
        quantify known properties of generated text with respect to text length, model size, and decoding
        more correctly and with fewer restrictions than existing distributional evaluation metrics.
Third, we find through a human evaluation that \name better correlates with human quality judgements of text. 
Finally, we find that \name can be highly robust to the choice of quantization, embeddings, and scaling.
We open-source a pip-installable Python package to compute \name.\footnote{
        Available from \url{https://github.com/krishnap25/mauve}.
        See Appendix~\ref{supp:software} for an example of the \texttt{mauve} package in action.
    }
\section{\name}
\label{sec:mauve}

We begin by discussing the basics of open-ended text generation, and then introduce \name for measuring the divergence between machine generated text and human text.
\myparagraph{Open-ended Text Generation}
A language model is an estimate $\hat P(\xv)$ of the probability distribution over sequences of text $\xv=(x_1,\ldots,x_{|\xv|})$, consisting of tokens $x_t$ belonging to a fixed vocabulary (e.g. characters, or words).
Prevailing neural autoregressive language models estimate the joint distribution $\hat P(\xv)$ by modeling the conditional distribution $\hat P(x_{t+1}|\xv_{1:t})$ over the next token in a sequence.
The open-ended text generation task asks us to output text $\hat{\xv}_{t+1:|\xv|}$ in continuation of a given context $\xv_{1:t}$.
Unlike targeted generation tasks like translation or summarization, there is no ``correct'' output; the main criteria for open-ended text generation are coherence, creativity, and fluency.

Given a neural autoregressive language model $\hat P$, we can generate open-ended text in a serial, left-to-right fashion, by sampling $\hat{x}_{t+1} \sim \hat P(\cdot|\xv_{1:t})$, $\hat{x}_{t+2} \sim \hat P(\cdot|\xv_{1:t}, \hat{x}_{t+1})$, etc. In practice, this simple decoding algorithm is often modified by adjusting the conditional distribution $\hat P(\cdot|\xv_{1:t})$ to promote more conservative outputs.
The decoding algorithm and the language model taken together define a distribution $Q$ over text, which we call the \textit{model distribution}.
Common decoding algorithms include temperature rescaling~\cite{ackley1985learning}
and truncation~\cite{fan2018heirarchical,holtzman2019curious}. Note that truncation methods in particular create sparsity in $Q$, which leads to degeneracy of some measures including test-set perplexity.

\myparagraph{Sources of Error in Text Generation} 
Our goal in this work is to measure the gap between the model distribution $Q$ and the target distribution $P$ of human text. 
As highlighted in Figure~\ref{fig:main:illustration},
this gap arises from two sources of error: 
\begin{itemize}[itemsep=0cm,leftmargin=1.7cm,topsep=0cm]
\item[(Type I)] $Q$ places high mass on text which is unlikely under $P$, 
\item[(Type II)] $Q$ cannot generate text which is plausible under $P$.
\end{itemize}

The Type I errors are false positives, 
including the common failure case where a model generates text with semantic repetitions \cite{dinan2019second,holtzman2019curious,welleck2020neural} that are highly unlikely to be written by humans.%
\footnote{
Let text $\xv$ with $P(\xv) \gg 0$ be the positive class
and $P(\xv) \approx 0$ be the negative class.
If $Q(\xv) \gg 0$ for some negative $\xv$, 
then the model incorrectly considers it a positive,
so it is a {\em false} positive. 
}
The Type II errors are false negatives, 
which can occur, for instance, because some pieces of plausible human text cannot be generated by truncation-based decoding algorithms such as nucleus sampling~\cite{holtzman2019curious}. 
The gap between $P$ and $Q$ is small only if both of these errors are small.

\myparagraph{Quantifying the Errors}
We formalize the Type I and II errors with the Kullback-Leibler (KL) divergences $\kl(Q|P)$ and $\kl(P|Q)$, respectively. The divergence $\kl(Q|P)$ penalizes $Q$ 
if there exists text $\xv$ such that $Q(\xv)$ is large but $P(\xv)$ is small, so it quantifies the Type I error. 
Likewise, $\kl(P|Q)$ quantifies the Type II error. 

Unfortunately, one or both of the KL divergences $\kl(P|Q)$ and $\kl(Q|P)$ are infinite if the supports of $P$ and $Q$ are not identical, which is often the case in open-ended generation. 
This makes the KL divergence itself unsuitable as an evaluation metric.
We overcome this issue by {\em softly} measuring the two errors using the
mixture distribution
$R_\lambda = \lambda P + (1-\lambda)Q$ for some $\lambda \in (0, 1)$.
In particular, we define the (soft) Type I error at level $\lambda$ as $\kl(Q|R_\lambda)$ and the (soft) Type II error as $\kl(P|R_\lambda)$.

\begin{figure*}[t]
    \centering
    \includegraphics[width=0.98\linewidth]{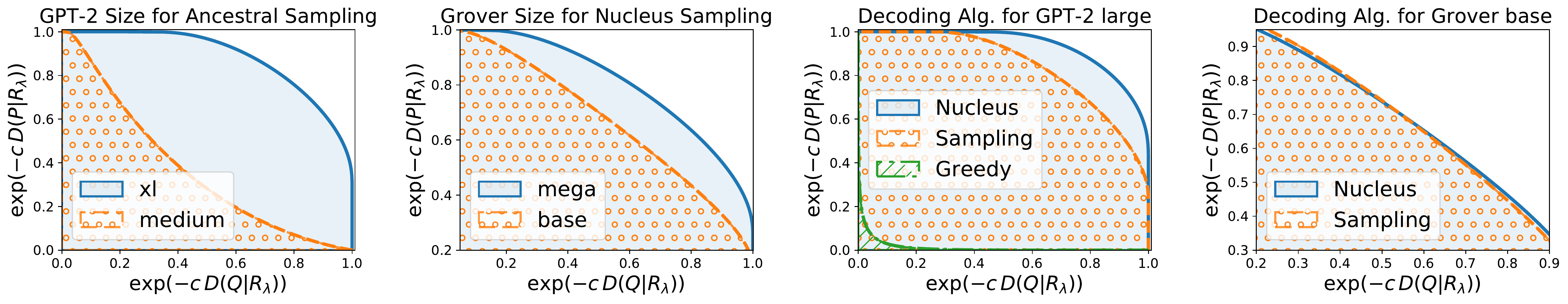}
    \caption{ \small
    Divergence curves for different models (GPT-2~\cite{radford2019language}, Grover~\cite{zellers2019grover}) 
    and decoding algorithms (greedy decoding, ancestral and nucleus sampling). 
    \name is computed as the area of the shaded region, and larger values of \name indicate that $Q$ is closer to $P$.
    In general, \name indicates that generations from larger models and nucleus sampling are closer to human text.
    \textbf{Rightmost}: Nucleus sampling has a slightly smaller Type I error
    than ancestral sampling but a higher Type II error, 
    indicating that ancestral sampling with Grover base produces more degenerate text while nucleus sampling 
    does not effectively cover the human text distribution.
    }
    \label{fig:mauve:illustration}
\end{figure*}

\myparagraph{Summarizing the Errors with a Divergence Curve}
Since the mixture weight $\lambda$ was arbitrary, 
we consider a family of Type I and II error values by varying $\lambda$ between 0 and 1,
in the same spirit as information divergence frontiers~\citep{sajjadi2018assessing,djolonga2020precision}.
This yields a {\em divergence curve},
\begin{align} \label{eq:div-curve}
    \Ccal(P, Q) =
    \Big\{ \big(\exp(-c\, \kl(Q|R_\lambda)), \exp(-c\, \kl(P|R_\lambda)) \big)
    \, :\, 
    R_\lambda = \lambda P + (1-\lambda)Q, \,
    \lambda \in (0, 1) \Big\} \,,
\end{align}
where $c > 0$ is a hyperparameter 
for scaling.
The divergence curve formalizes and encodes information about the trade-off between Type I and II errors.\footnote{More generally, 
the divergence curve $\Ccal(P, Q)$
encodes the \textbf{Pareto frontier}
of $\big(\kl(P|R), \kl(Q|R)\big)$
for all distributions $R$,
not just mixtures of the form $R_\lambda$. 
We prove this in Appendix~\ref{supp:pr}.
}
Figure~\ref{fig:mauve:illustration} illustrates the divergence curves for different models and decoding algorithms.

Our proposed measure, 
$\textbf{\name}(P,Q)$, is the area under the divergence curve $\Ccal(P, Q)$. 
It provides a scalar summary of the trade-off between Type I and Type II errors.
$\name(P,Q)$ lies in $(0,1]$, with a larger value 
meaning that $Q$ is closer to $P$. Further, 
$\name(P, Q) = 1$ if and only if $Q=P$. %
The area under the curve is a common summary of trade-off curves in machine learning~\cite{cortes2005confidence,clemencon2009precision, clemenccon2010overlaying, flach2012machine}.

\myparagraph{Connections to Common Divergences}
The divergence curve encodes more information
than the KL divergence $\kl(P|Q)$, which can be obtained from the second coordinate of the curve $\Ccal(P, Q)$ as $\lambda \to 0$, and the reverse KL divergence $\kl(Q|P)$ which can be obtained from the first coordinate of the curve $\Ccal(P, Q)$ as $\lambda \to 1$. 
Further, the Jensen-Shannon (JS) divergence $\mathrm{JS}(P, Q) = \big(\kl(P|R_{1/2}) + \kl(Q|R_{1/2})\big)/2$, 
can be obtained from the two coordinates of $\Ccal(P, Q)$ at $\lambda=1/2$.
\name summarizes {\em all} of the divergence curve $\Ccal(P, Q)$.

\begin{figure*}[t]
    \centering
    \includegraphics[width=0.8\linewidth]{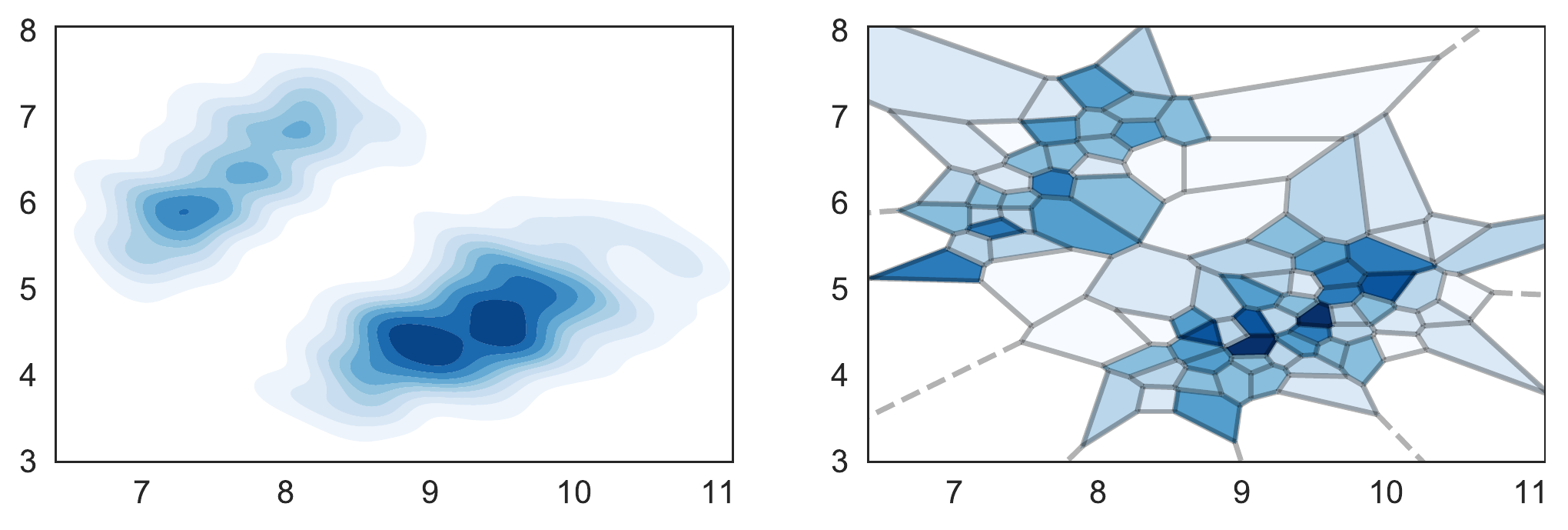}
    \caption{\small
    Illustration of the quantization.
    \textbf{Left}: A continuous two-dimensional distribution $P$.
    \textbf{Right}: A partitioning of the Euclidean plane $\reals^2$ and the corresponding quantized distribution $\tilde P$.
    }
    \label{fig:mauve:quant}
\end{figure*}

\myparagraph{Computing \name for Open-Ended Text Generation}
Each point on the divergence curve $\Ccal(P, Q)$ consists of a coordinate
\begin{align}\label{eqn:klpr}
    \kl(P|R_\lambda) &= \sum_{\xv}P(\xv)\log\frac{P(\xv)}{R_\lambda(\xv)}, 
\end{align}
and a similarly defined coordinate $\kl(Q|R_\lambda)$. 
We cannot compute the summation as written in Eq.~\eqref{eqn:klpr}, as we do not know the ground-truth probabilities $P(\cdot)$
and the support of a typical model distribution is prohibitively large, since it is the space of all sequences of tokens.
As a result of these two issues, \name cannot be tractably computed in closed form. 

We employ a Monte Carlo estimator using samples $\xv_i\sim P$ and $\xv_i'\sim Q$ to overcome the fact that ground-truth probabilities $P(\cdot)$ are unknown.
We circumvent the intractable support size by computing \name in a quantized embedding space that is sensitive to important features of text.

The overall estimation procedure is as follows.
First, we sample human text $\xv_i \sim P$ and machine text $\xv_i'\sim Q$. We then embed each text sequence using an external language model $M$ (e.g., GPT-2~\citep{radford2019language}) to obtain embeddings $\{M(\xv_i)\}_{i=1}^N$  and $\{M(\xv_i')\}_{i=1}^{N'}$. 
Each embedding is now a vector $M(\xv)\in \reals^d$.
Next, we jointly quantize the embedded samples (e.g. with $k$-means~\cite{manning2001foundations}), and count the cluster assignments to form histograms, giving low-dimensional discrete distributions that approximate each high-dimensional text distribution.
In particular, the distribution $P$ of human text is approximated by the discrete distribution $\tilde P$ of support size $k$, which is defined as,
\begin{align}
\label{eqn:approx}
    \tilde{P}(j)=\frac{1}{N} \sum_{i=1}^N \mathbb{I}\big(\phi(\xv_i) = j \big) \,,
\end{align}
where $\phi(\xv)\in\{1,\cdots,k\}$ returns the cluster id of $\xv$.
The model distribution $Q$ is approximated as $\tilde Q$ similarly.
Here, $\tilde P$ and $\tilde Q$ can be interpreted as piecewise constant approximations of $P$ and $Q$, similar to a histogram; see Figure~\ref{fig:mauve:quant}
for an illustration. 
Computing the divergence curve is now tractable, as each coordinate is a KL divergence between the $k$-element discrete distributions. 

To recap, our proposed measure $\textbf{\name}{(P,Q)}$ is the area under this divergence curve, providing a summary of all Type I and Type II errors through an efficient approximation designed for text generation.
Next, we discuss how \name compares to prior comparison measures for text (\S\ref{sec:background}), then present empirical results with \name (\S\ref{sec:experiments}).

\section{Related Work}
\label{sec:background}

\myparagraph{Divergence Measures for Text}
Prior measures of similarity/divergence between 
machine text and human text come in three broad categories: (a) reference-based, (b) statistics-based, and (c) language modeling.
Table~\ref{tab:nlp:auto-metrics-survey} summarizes the latter two categories, and contrasts them with \name. 

\textit{Reference-based measures} evaluate generated text with respect to a (small set of) reference text sample(s), rather than comparing full sequence distributions.
These include classical metrics for $n$-gram matching~\cite{papineni2002bleu,lin2004rouge,banerjee2005meteor},
which are designed to capture similarities in the surface form of the generated text and the human references, making them fundamentally ill-suited for open-ended generation. 
Moreover, it has been recently shown in \cite{novikova2017need} show that these classical metrics only weakly agree with human judgments.

More recent reference-based metrics are capable of comparisons in a high dimensional space~\cite{shimanaka2018ruse,zhang2020bertscore,sellam2020bleurt,clark2019sentence}, thereby capturing distributional semantics beyond superficial $n$-gram statistics.
For instance, Moverscore \cite{zhao2019moverscore} relies on the Word Mover's distance \cite{kusner2015word}, 
and is an instance of an optimal transportation distance~\cite{villani2021topics}.
It computes the minimum cost of transforming the generated text to the reference text, taking into account Euclidean distance between vector representations of $n$-grams, as well as their document frequencies.
The paradigm of reference-based measures is useful for targeted generation tasks such as translation and summarization where matching a set of references is paramount. It is, however, unsuitable for open-ended generation where there typically are several plausible continuations for each context and creative generations are desirable.

\begin{table*}[t]
\centering
\footnotesize
\begin{tabular}{p{1.2cm}p{2.8cm}p{4.2cm}p{4.1cm}}
\toprule
\bf Type & \bf Metric &   \bf Measures  &  \textbf{Approximates} \\
\midrule
\multirow{3}{1cm}{Statistics}
&  Zipf Coefficient \citep{holtzman2019curious} & Unigram rank-frequency statistics & --
\\
& Self-BLEU  \citep{zhu2018texygen} & N-gram diversity  & --
\\
& Generation Perplexity \citep{fan2018heirarchical} & Generation quality via external model $R$& $|\mathbb{E}_Q [\log R(\xv)]-\mathbb{E}_P [\log R(\xv)]|$ (a single point inside $\Ccal(P, Q)$) \\
\midrule[0.03em]
\multirow{4}{1cm}{Language Modeling} &
Perplexity & Test-set perplexity & $\mathbb{E}_P [\log Q(\xv)]$\\
& $\epsilon$-perplexity \citep{martins2020sparse} & Perplexity w/ Laplace smoothing & $\mathbb{E}_P [\tilde{Q}(\xv)]$\\
& Sparsemax Score \citep{martins2020sparse} & LM quality (sparsemax loss~\cite{martins2016softmax}) & $\mathbb{E}_P [\tilde{Q}(\xv)]$ \\
& Token JS-Div. \citep{martins2020sparse} & LM quality (JS divergence)
& $\mathbb{E}_P [\tilde{Q}(\xv)]$\\
\midrule[0.03em]
Divergence Curve & {\name (this work)} &\begin{tabular}{l}Quality \& diversity via \\ the divergence curve \end{tabular} & $\Ccal(P,Q)$ at all $\lambda$\\
\bottomrule
\end{tabular}
\caption{\small{Summary of automatic distributional metrics for evaluating open-ended text generation. 
\name provides a summary of all points along the divergence curve, rather than a single point.
The summary is based on comparisons in a joint embedding space, rather than a statistic computed independently on each distribution.
$\tilde{Q}$ informally refers to a quantity related to $Q$. 
}}
\label{tab:nlp:auto-metrics-survey}
\end{table*}

\textit{Statistics-based measures} compare
the model distribution $Q$ with respect to the human distribution $P$ on the basis of some statistic 
$T(P)$ and $T(Q)$. 
Property-specific statistics such as the amount of repetition \citep{holtzman2019curious,welleck2020neural}, verifiability \citep{massarelli2019decoding}, or termination \citep{welleck2020consistency} 
are orthogonal to \name, which provides a summary of the overall gap between $P$ and $Q$ rather than focusing on an individual property.
Another statistic is the generation perplexity \citep{fan2018heirarchical,holtzman2019curious},
which compares the perplexity of the model text $\xv \sim Q$ with that of human text $\xv' \sim P$ under an external model $R$.
By virtue of $T(\cdot)$ being a scalar, generation perplexity cannot trade-off the Type I and Type II errors
like \name. %
In fact, we show in Appendix~\ref{supp:pr} that the generation perplexity can be derived from {\em a single point} enclosed between the divergence curve and the axes.

\textit{Language modeling metrics} calculate how (un)likely
human text $\xv\sim P$ is under the model distribution $Q$, for instance, using the probability $Q(\xv)$.  
These metrics are related to a single point on the divergence curve, rather than a full summary. 
Examples include the perplexity of the test set (which is a sample from $P$) under the model $Q$ and its generalizations to handle sparse distributions~\cite{martins2020sparse}. 
Unlike \name, these metrics never see model text samples $\xv' \sim Q$, so they cannot account for how likely the model text is under the human distribution $P$. 
Moreover, they cannot be used for decoding algorithms such as beam search which do not define a token-level distribution.

Automatic metrics have been proposed for specific domains such as generation of dialogues~\cite{tao2018ruber}, stories~\cite{guan2020union}, and others~\cite{optiz2021towards}. 
They capture task-specific properties; see the surveys~\cite{celikyilmaz2020evaluation,sai2020survey}.
In contrast, \name compares machine and human text in a domain-agnostic manner.
Other related work has proposed metrics that rely on multiple samples for quality-diversity evaluation  \cite{Caccia2020Language}, and Bayesian approaches to compare the distribution of statistics in machine translation \cite{eikema2020map}.

\myparagraph{Non-automatic Metrics} 
HUSE~\citep{hashimoto2019unifying} aims to combine human judgements of Type I errors with Type II errors measured using perplexity under $Q$. 
Due to the costs of human evaluation, we consider HUSE, as well other metrics requiring human evaluation, such as single-pair evaluation, as complementary to \name, which is an automatic comparison measure.
As a separate technical caveat, it is unclear how to use HUSE for sparse $Q$ that assigns zero probability to a subset of text, which is the case with state-of-the-art decoding algorithms~\cite{holtzman2019curious,martins2020sparse}.

\myparagraph{Evaluation of Generative Models}
Evaluation of generative models is an active area of research in computer vision, where generative adversarial networks \cite{goodfellow2014generative} are commonly used.
However, metrics such as Inception Score \cite{salimans2016improved} are based on large-scale supervised classification tasks, and thus inappropriate for text generation.
The Fr\'{e}chet Distance~\cite{heusel2017gans,semeniuta2018accurate} and its unbiased counterpart, the Kernel Inception Distance \cite{bikowski2018demystifying} are both used for evaluating generative models, but unlike \name, do not take into account a trade-off between different kinds of errors between the learned and a reference distribution.
\citet{sajjadi2018assessing} and \citet{kynknniemi2019improved} both proposed metrics based on precision-recall curves.
\citet{djolonga2020precision} proposed information divergence frontiers as a unified framework emcompassing both these works as special cases. 
\name extends the above line of work, and is operationalized for open-ended text generation, applicable for data generated by large-scale neural language models. 
Complementary to this work, \citet{liu2021divergence} study the theory of information divergence frontiers, proving non-asymptotic bounds on the estimation and quantization error.

\section{Experiments}
\label{sec:experiments}

We perform three sets of experiments to validate \name.
Our first set of experiments (\S\ref{ssec:expr-identify}) examine how known properties of generated text with respect to generation length, decoding algorithm,  and model size can be identified and quantified by \name.
Next, in \S\ref{ssec:expr-approx} we demonstrate that \name is robust to various embedding strategies, quantization algorithms, and hyperparameter settings.
Finally, in \S\ref{subsec:human-eval} we find that \name correlates with human judgments.
The code as well as the scripts to reproduce the experiments are available online.\footnote{
        \url{https://github.com/krishnap25/mauve-experiments}.
}

\myparagraph{Tasks}
We consider open-ended text generation using a text completion task~\citep{holtzman2019curious,welleck2020neural} in three domains: web text, news and stories.
Each domain consists of a sequence dataset split into (context, continuation) pairs.
Given a context $\xv_{1:k}$, the task is to generate a continuation $\hat{\xv}_{k+1:T}\sim Q(\cdot\mid\xv_{1:k})$, forming a completion.
Each ground-truth completion $\xv_{1:T}$ is considered a sample from the true distribution $P$, while the completion $(\xv_{1:k},\hat{\xv}_{k+1:T})$ is considered a sample from $Q$.
The datasets, context and completion lengths, and number of completions used for each domain are shown in Table~\ref{table:expt:details}.

\begin{table}[t!]
\centering
\begin{adjustbox}{max width=0.99\textwidth}
\begin{tabular}{llclrrr}
\toprule
\bf Task Domain & \bf Model & \bf Finetuning & \bf Dataset & 
\bf \begin{tabular}{r} Prompt \\ Length \end{tabular} & 
\bf \begin{tabular}{r} Max. Generation \\ Length \end{tabular} & 
\bf \begin{tabular}{r} Number of \\ Generations \end{tabular} \\

\midrule

Web text & 
GPT-2 (all sizes) & 
 Pretrained  &
Webtext & $35$ tokens  & $1024$ tokens & $5000$
\\
News & 
Grover (all sizes)& 
Pretrained  & 
RealNews & varying & $1024$ tokens & $5000$
\\
Stories & GPT-2 medium & Finetuned &
WritingPrompts & $50$ tokens & $512$ tokens & $5000$ 
\\
\bottomrule
\end{tabular}
\end{adjustbox} \vspace*{0.5mm}
\caption{
\small Dataset and task summary. Note that $1024$ tokens
correspond to $\sim750$ words on average.} 
\label{table:expt:details}
\end{table}

\myparagraph{Models}
As the language model $\hat P(\cdot)$, we use GPT-2, a large-scale transformer~\cite{vaswani2017attention} pretrained on the web text dataset (see \cite{radford2019language}), that is representative of state-of-the-art autoregressive language models. 
As the embedding model $M(\cdot)$ we use GPT-2 Large, and compare others in \S\ref{ssec:expr-approx}.

\myparagraph{Decoding Algorithms}
We consider three common decoding algorithms: \textit{ancestral sampling} which samples directly from the language model's per-step distributions, $x_t\sim \hat P(x_t\mid \xv_{1:t})$, 
\textit{greedy decoding} which selects the most likely next token, $x_t=\arg\max_{x\in \mathcal{V}} \hat P(x\mid\xv_{1:t})$, as well as
\textit{nucleus sampling}~\citep{holtzman2019curious} which samples from top-$p$ truncated per-step distributions, $x_t\sim \hat P_{\text{nuc},p}(x_t\mid \xv_{1:t})$, which is defined as 
\[
    \hat P_{\text{nuc},p}(x_t \mid \xv_{1:t}) 
    \propto
    \begin{cases}
    \hat P_{\text{nuc},p}(x_t \mid \xv_{1:t}), &\text{if } x_t \in V_p , \\
    0, & \text{else}.
    \end{cases}
\]
Here, the top-$p$ vocabulary $V_p$ is the smallest set $V$ such that 
$\sum_{x \in V} \hat P(x \mid \xv_{1:t}) \ge p$.

We also consider an adversarial sampling procedure, designed to generate low-quality text that nevertheless matches the perplexity of human text. 
Adversarial perplexity sampling proceeds in two phases: (1) we generate the first 15\% of tokens in a sequence uniformly at random from the vocabulary, and (2) we generate the remaining tokens greedily to make the running perplexity of the generated sequence as close as possible to the perplexity of human text.

\subsection{Quantifying Properties of Generated Text}
\label{ssec:expr-identify}

To study \name's effectiveness as a measure for comparing text distributions, we first examine how \name quantifies known properties of generated text: a good measure should meet expected behavior that is known from existing research on each property.
Specifically, we investigate how \name behaves under changes in generation length, decoding algorithm, and model size.

\begin{figure*}[t]
\includegraphics[width=\linewidth]{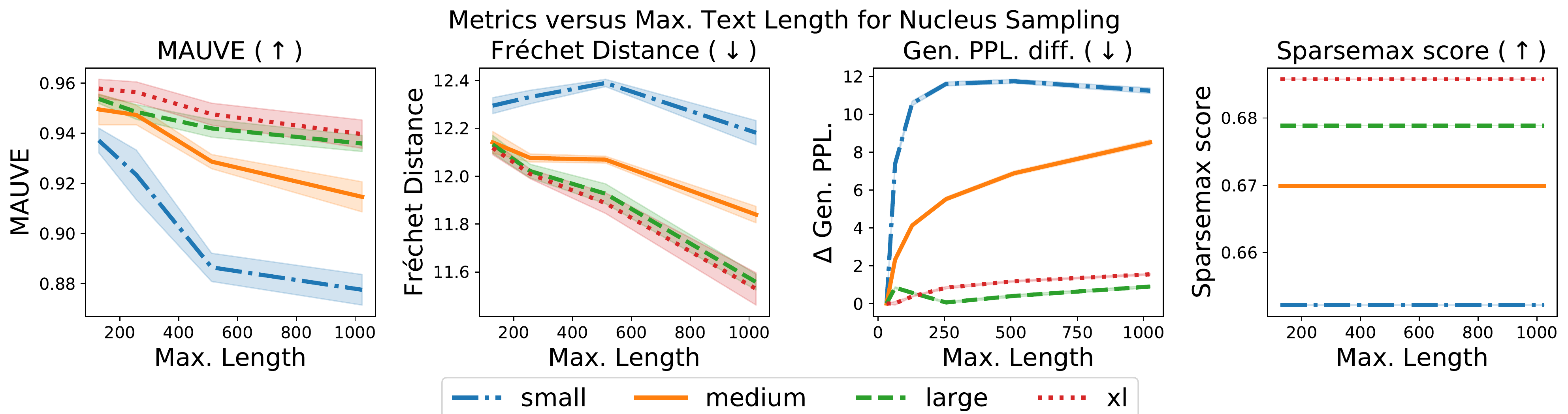}
\caption{
\small
Generation quality versus maximum generation length according to \name and three alternative measures (web text, GPT-2).
\name is the only comparison measure which identifies that generation quality decreases monotonically with increasing text length. 
The shaded area shows one standard deviation over generations from 5 random seeds.
}
\label{fig:expt:length:main}
\end{figure*}

\myparagraph{\name quantifies quality differences due to generation length}
Although large transformer-based models can generate remarkably fluent text, it has been observed that the quality of generation deteriorates with text length: 
as the generation gets longer, the model starts to wander, switching to unrelated topics and becoming incoherent~\cite{rashkin2020plotmachines}.
As a result, an effective measure should indicate lower quality (e.g. lower \name) as generation length increases. 

Figure~\ref{fig:expt:length:main} shows \name as the generation length increases, along with three alternative metrics: generation perplexity, sparsemax score, and Fr\'echet distance~\cite{heusel2017gans,semeniuta2018accurate}. 
\name reflects the desired behavior, showing a decrease in quality (lower \name) as generation length grows, with the trend consistent across model sizes. 
The other three metrics, however, show less favorable trends.
Fr\'echet distance indicates \textit{improving} quality as the length increases, while generation perplexity shows non-monotonic quality trends for the small and large models.
Finally, language modeling metrics such as the sparsemax score~\cite{martins2020sparse} remain constant, since they do not depend on the samples generated.

\myparagraph{\name identifies quality differences between decoding algorithms}
Recent work has identified two clear trends in open-ended text generation with standard autoregressive models: (1) using greedy decoding 
results in repetitive, degenerate text~\cite{holtzman2019curious,welleck2020neural,welleck2020consistency}; (2) nucleus sampling 
(and related truncated sampling methods) 
yields higher quality text than ancestral sampling~\citep{fan2018heirarchical,holtzman2019curious}.\footnote{In general this relationship depends on the nucleus hyperparameter $p$ and task. 
Here, we follow the same settings as \citet{holtzman2019curious}, and additionally include a human-assessed measure of quality.}
An effective measure should thus indicate the quality relationship $\text{greedy}\prec\text{ancestral}\prec{\text{nucleus}}$.

Table~\ref{tbl:decoding} summarizes \name's quality measures of greedy decoding, ancestral sampling, and nucleus sampling, along with alternative automated metrics and a human quality score.
\name correctly identifies the expected quality relationship, assigning the lowest quality to greedy decoding ($.016$) followed by ancestral sampling ($.882$), and the highest quality to nucleus sampling ($.940$).
Other commonly-used metrics fail to identify this relationship: generation perplexity rates the highly degenerate greedy-decoded text as better than ancestral sampling ($11.324$ vs. $19.284$), while the language-modeling metrics (SP, JS, $\epsilon$-PPL) rate nucleus-decoded text as equal to or worse than greedy decoding or ancestral sampling.
Further, as we show in Appendix~\ref{supp:expt-results}, \name rightly identifies degeneracy of beam search, thus quantifying the qualitative observations of \citet{holtzman2019curious}.
Finally, generation perplexity falls victim to the adversarial decoder (Adv.), unlike \name.\footnote{The results are consistent across model sizes and random seeds (see Appendix~\ref{supp:expt-results}).}

\myparagraph{\name quantifies quality differences due to model size}
Scaling the model size has been a key driver of recent advances in NLP, with larger models leading to better language modeling and higher quality generations in open-ended settings \citep{radford2019language,brown2020language}.
An effective metric should capture the relationship between model size and generation quality, which we verify with human quality scores.

Table~\ref{tbl:modelsize} shows \name's quality measures as the model size increases, along with alternatives and human quality scores.
\name increases as model size increases, agreeing with the human quality measure and the expectation that larger models should have higher quality generations.
The widely-used generation perplexity, however, incorrectly rates the large model's text as the best.
Although the language modeling metrics (SP, JS, and $\epsilon$-PPL) capture the size-quality relationship, they are constant with respect to length (Figure~\ref{fig:expt:length:main}), and did not correctly quantify decoding algorithm quality (Table~\ref{tbl:decoding}). 

Table~\ref{tab:mauve:expt:webtext-appendix} in Appendix~\ref{supp:expt-results} shows additional results with ancestral sampling.
In this case, human evaluators rated generations from the small model as better than those from the medium model. 
Interestingly, \name also identified this relationship, agreeing with the human ratings, in contrast to the other automatic metrics we surveyed.

\begin{table}[t]
\RawFloats
\footnotesize
\begin{minipage}[t]{0.47\linewidth}
\setlength{\tabcolsep}{2pt}
\begin{center}
{\renewcommand{\arraystretch}{1.2}%
\begin{tabular}{rrrrr}
\toprule
& \textbf{Adv.}&\textbf{Greedy} & \textbf{Sampling} & \textbf{Nucleus} \\\midrule
\textbf{Gen. PPL}($\downarrow$) & \tabemph{$\mathbf{0.05}$} & $11.3$ & $19.3$ & $1.54$\\
\textbf{Zipf}($\downarrow$)& $0.03$ & $0.02$ & $0.02$ & \tabemph{$\mathbf{0.01}$}\\
\textbf{Self-BLEU}($\downarrow$)& $0.07$ & $0.03$ & \tabemph{$\mathbf{0.02}$} & $0.03$\\
 \bf   SP($\uparrow$) & -- & $0.50$ &\tabemph{$\mathbf{0.69}$} & $0.69$\\
 \bf JS($\downarrow$) &-- & \tabemph{$\mathbf{0.35}$} & $0.37$ & $0.36$\\
 \bf $\epsilon$-PPL($\downarrow$) & --& $497$ & \tabemph{$\mathbf{11.4}$}  & $13.7$\\
\bf \name($\uparrow$) & $0.06$ & $0.02$ & $0.88$ & \tabemph{$\mathbf{0.94}$}\\\hline
 \bf Human($\uparrow$) & --& -- & $9.0$ & \tabemph{$\mathbf{15.7}$}\\
 \bottomrule
\addlinespace[0.4em] 
\end{tabular}
}
\caption{ \small 
Generation quality w.r.t\ different \textbf{decoding algorithms} (web text, GPT-2 xl) under various metrics, and humans. 
\name correctly captures the relationship $\text{greedy}\prec\text{ancestral}\prec{\text{nucleus}}$, and rates the adversarial decoder's text as low quality.
Results are consistent across model sizes and random seeds. %
Boldfaced/highlighted entries denote the best decoding algorithm under each metric.
}
\label{tbl:decoding}
\end{center}
\end{minipage}
\hfill
\begin{minipage}[t]{.47\linewidth}
\setlength{\tabcolsep}{3pt}
\begin{center}
{\renewcommand{\arraystretch}{1.2}%
\begin{tabular}{rrrrr}
\toprule
& \textbf{Small}&\textbf{Medium} & \textbf{Large} & \textbf{XL} \\\midrule
\textbf{Gen. PPL}($\downarrow$) & $11.2$ & $8.5$ & \tabemph{$\mathbf{0.9}$} & $1.5$\\
\textbf{Zipf}($\downarrow$) & $0.06$ & \tabemph{$\mathbf{0.00}$} & $0.02$ & $0.01$\\
\textbf{Self-BLEU}($\downarrow$) & $0.05$ & \tabemph{$\mathbf{0.02}$} & $0.03$ & $0.03$\\
 \bf   SP($\uparrow$) &  $0.65$ & $0.67$ & $0.68$ & \tabemph{$\mathbf{0.69}$}\\
 \bf JS($\downarrow$) & $0.41$ & $0.39$ & $0.37$ & \tabemph{$\mathbf{0.36}$}\\
 \bf $\epsilon$-PPL($\downarrow$) & $25.9$ & $18.8$ & $14.9$ & \tabemph{$\mathbf{13.7}$}\\
 \bf \name($\uparrow$) & $0.878$ & $0.915$ & $0.936$ & \tabemph{$\mathbf{0.940}$}\\\hline
 \bf Human($\uparrow$) & $-15.9$ & $-3.4$ & $12.6$ & \tabemph{$\mathbf{15.7}$}\\
 \bottomrule
\addlinespace[0.4em] 
\end{tabular}
}
\caption{\small
Generation quality w.r.t\ different \textbf{model sizes} (web text, nucleus sampling) under various metrics, as well as human evaluators.
\name captures the relationship between model size and generation quality, agreeing with human-evaluated quality.
Results are consistent across random seeds and decoding algorithms. %
Boldfaced/highlighted entries denote the best model size under each metric.
}
\label{tbl:modelsize}
\end{center}
\end{minipage}
\end{table}

\myparagraph{Summary}
\name identifies properties of generated text that a good measure should capture, related to length, decoding algorithm, and model size.
In contrast, commonly used language modeling and statistical measures did not capture all of these properties.
Unlike these alternatives, which capture a single statistic or relate to a single point on the divergence curve, \name's summary measure incorporates type I errors that quantify the degenerate text produced by greedy decoding (recall Figure~\ref{fig:main:illustration}), while capturing distribution-level information that describes quality changes from generation length, model size, and the nuanced distinction between ancestral and nucleus sampling.

\subsection{Approximations in \name}
\label{ssec:expr-approx}

\name summarizes the divergence between two text distributions with an approximation that relies on two components: an embedding model $M(\xv)$ and a quantization algorithm $\mathcal{A}$ (\S\ref{sec:mauve}, Eq.~\eqref{eqn:approx}). 
We study the effects of these two components.

\myparagraph{\name works with alternative embedding models}
Figure~\ref{fig:mauve:feature_type} (left) shows that \name with features from RoBERTa- large~\cite{liu2019roberta} gives qualitatively similar trends across model size and decoding as \name with features from GPT-2 large.
Quantitatively, the Spearman rank correlation between them across all model and decoders is $0.993$.
We observe that RoBERTa penalizes smaller models more than GPT-2 but rates greedy decoding higher. We leave further study of inductive biases in the different embedding models to future work. %

\myparagraph{\name is robust to quantization}
We compare different three different quantization algorithms:%
\begin{enumerate}[itemsep=0cm,leftmargin=0.5cm,topsep=0cm,label=(\alph*)]
 \item $k$-Means: We cluster the hidden representations using $k$-means, and represent
    them by their cluster membership
    to get a discrete distribution with
    size equal to the number of clusters.
\item  {Deep Residual Mixture Models (DRMM)}: 
    As a generalization of $k$-means, we train a deep generative model known as DRMM~\cite{hamalainen2020deep}. 
    We convert the soft clustering returned by DRMM into a hard clustering by assigning each point to its most likely cluster,
    and quantize the data using the cluster membership.
    We use DRMM with $3$ layers and $10$ components per layer for a total of $10^3$ clusters, and train it for $20$ epochs.

\item {Lattice Quantization}: We learn a $4$-dimensional feature representation of the vectors $M(\xv)$ using a deep network which maintains the neighborhood structure of the data while encouraging the features to be uniformly distributed on the unit sphere~\cite{sablayrolles2018spreading}. 
We quantize the data on a uniform lattice into $744$ bins. 
\end{enumerate}

We compare different choices of the quantization to $k$-means with $k=500$, which is our default. 
The Spearman rank correlation between \name computed with $k$-means for $k$ ranging from $100$ to $5000$ correlates nearly perfectly with that of $k=500$. In particular, the Spearman correlation is exactly $0.99$ or $1.00$. Likewise, \name computed with DRMM or lattice quantization has a near-perfect Spearman correlation of at least $0.99$ with $k$-means. 
While the actual numerical value of \name could vary with the quantization algorithm, these results show that the {\em rankings induced by various variants of \name are nearly identical}.

\begin{figure*}
    \centering
    \includegraphics[width=0.48\linewidth]{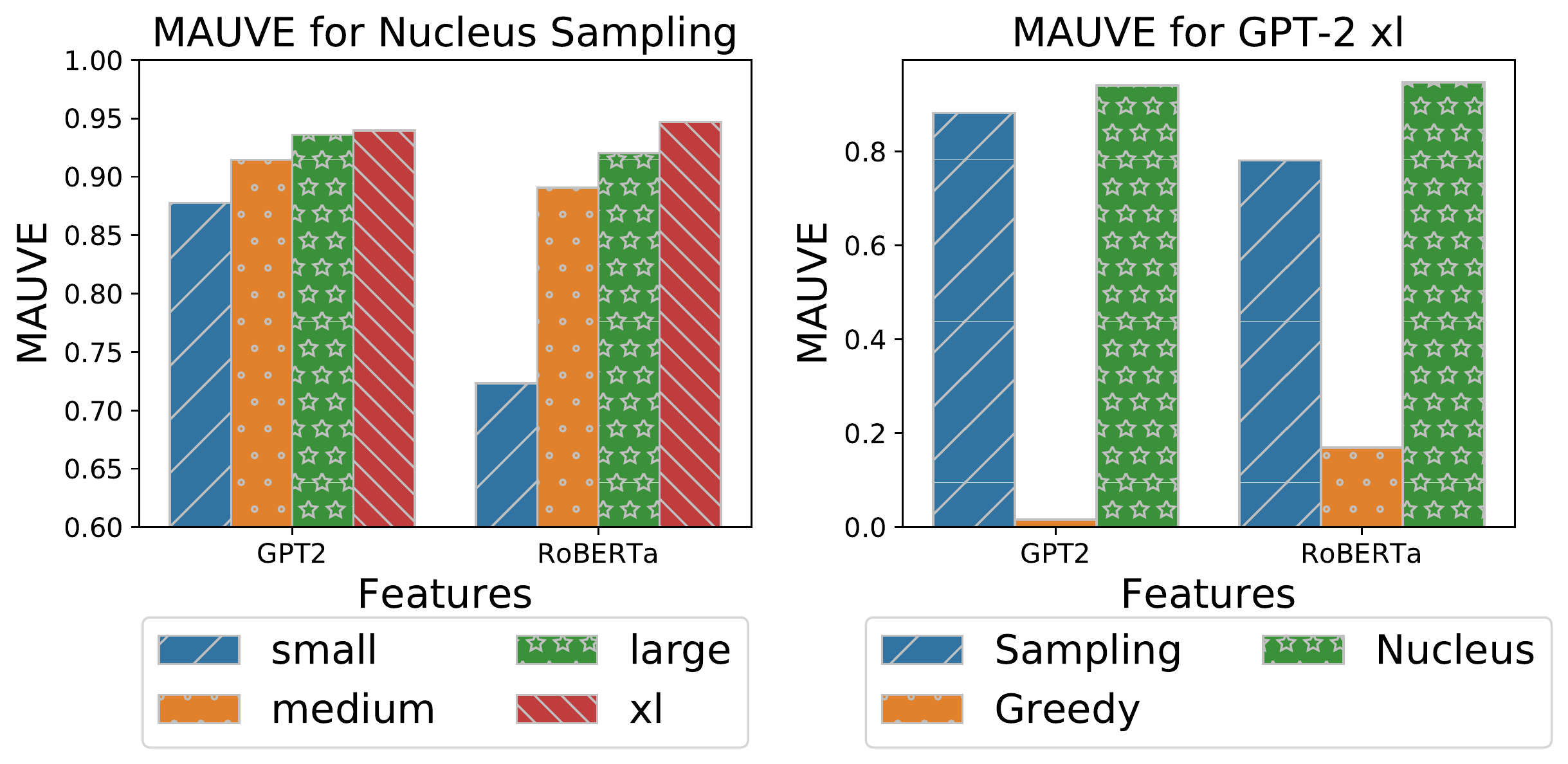}
    \includegraphics[width=0.475\linewidth]{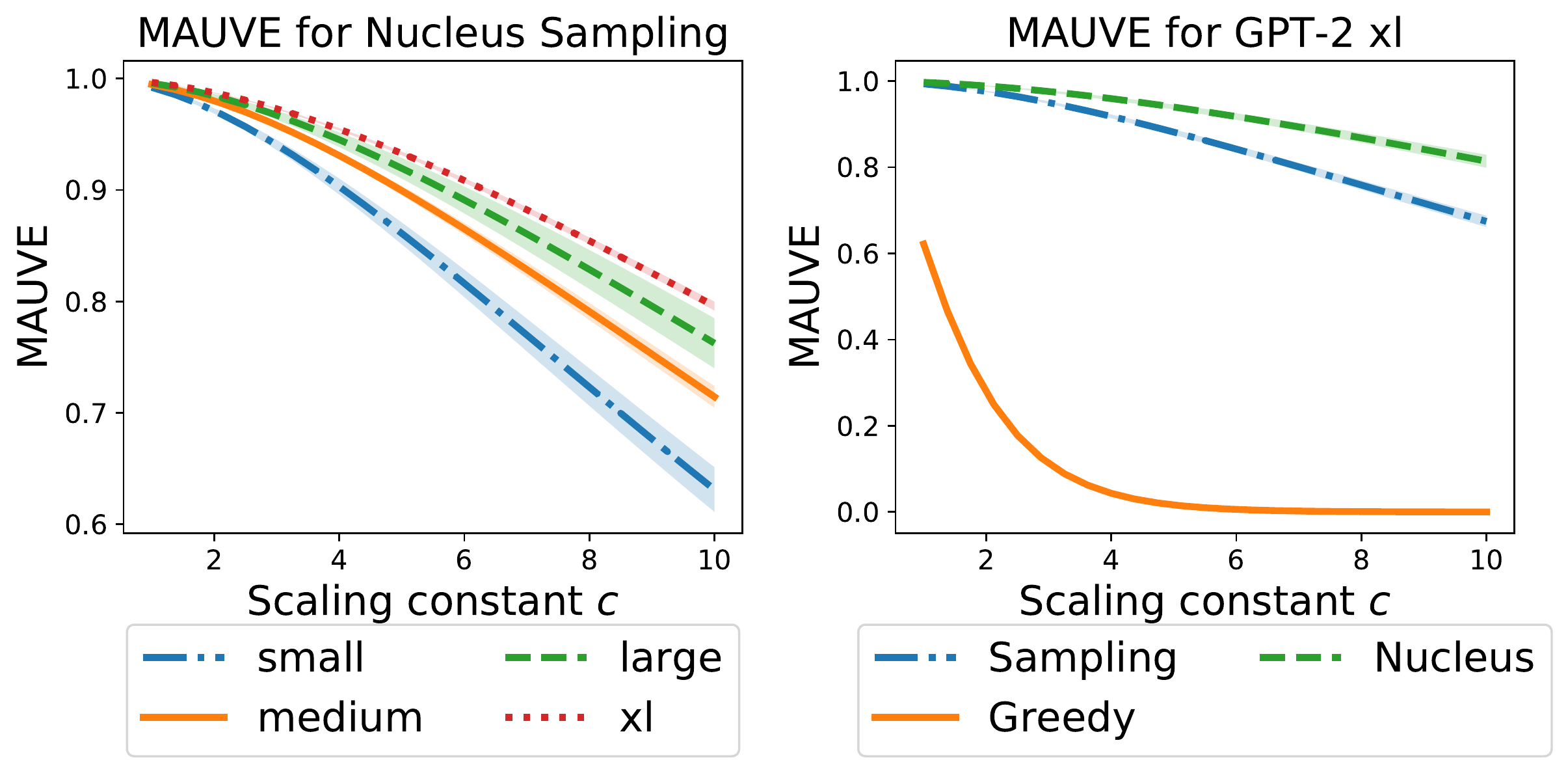}
    \caption{
    \small
    \textbf{Left}: \name computed using GPT-2 (default) and RoBERTa \cite{liu2019roberta} embeddings, across model sizes and decoding algorithms; see Table~\ref{tab:mauve:expt:bert-features-appendix} in the Appendix for further results.
    The Spearman rank correlation between the two is \textbf{0.993} across model sizes and decoding algorithms.
    \textbf{Right}: Effect of the scaling constant $c$ on \name. 
    Choice of $c$ does not affect the relative order of the curves but only the numerical value. 
    We use $c=5$ to get interpretable values with both nucleus and greedy decoding.
    }
    \label{fig:mauve:feature_type}
\end{figure*}

\myparagraph{Practical recommendation for scaling parameter}
Figure~\ref{fig:mauve:feature_type} (right) shows the effects of adjusting the scaling parameter $c$, which does not affect the relative order of the divergence curve, but adjusts the numerical value returned by \name.
As a practical recommendation, we found $c=5$ to yield interpretable values.

\subsection{Correlation with Human Judgments}
\label{subsec:human-eval}

An effective metric should yield judgments that correlate highly with human judgments, assuming that human evaluators represent a gold-standard.\footnote{Concurrent work has shown that human evaluation might not always be consistent \cite{clark2021all,karpinska2021perils}; however human judgments continue to be the gold standard for evaluating open-ended text generation.}
We evaluate how \name's quality judgments correlate with human quality judgments.
In our study, a quality judgment means choosing a particular (model, decoder) setting based on the resultant generations.

\myparagraph{Evaluation Protocol}
To obtain human judgments, we employ a pairwise setup: at each round, an annotator receives a context and continuations from two different (model, decoder) settings, and selects the continuation they found more natural using a 5-point Likert scale.
Our interface for collecting annotations is shown in Figure~\ref{fig:mauve:human-eval-interface} of Appendix~\ref{sec:a:human-eval}, which also includes further details and additional results.

We collect these annotations for web text generation with 8 different (model, decoder) settings plus a ninth setting for human-written continuations. Each setting is a GPT-2 model size paired with either ancestral or nucleus sampling. This gives us a total of $36$ pairs of settings.
Given the known difficulties with human evaluation of longer texts \cite{ippolito2020automatic}, we use a maximum completion length of $256$ tokens.
We obtain $90$ preference ratings for each pair of settings, coming from a total of $214$ crowd-workers from the Amazon Mechanical Turk platform.
The evaluators were paid USD $0.40$ per evaluation based on an estimated wage of USD $16$ per hour.

We convert these pairwise preferences to a ranking by fitting a Bradley-Terry model~\cite{bt:book:1995}, a parametric model used to predict the outcome of a head-to-head comparison. In particular, we obtain a score $w_i$ for each setting $i$ so that the log odds of humans preferring setting $i$ to setting $j$ in a head-to-head comparison is given by the difference $w_i - w_j$.
For a given comparison measure, 
we compute the Spearman rank correlation between the comparison measure and the fitted Bradley-Terry coefficients $w_i$ for each of the (model, decoder) settings.
The end result is a correlation score in $[-1,1]$, with higher values meaning that quality judgments using the comparison measure correlate with quality judgments made by human evaluators.

\myparagraph{\name correlates with human judgments}
Table~\ref{tab:mauve:expt:gold-correlation-main} shows the correlation between human judgments and five automatic evaluation metrics obtained using our evaluation protocol on the web text domain.
\name correlates highly with human judgments of how human-like ($0.952$), interesting $(0.810)$, and sensible $(0.857)$ the machine text is.
\name's correlations with human judgments are substantially higher than those for the other automated measures; for instance, the commonly-used generation perplexity has correlations that are $0.12$ to $0.17$ lower than \name's.
The results suggest that \name may act as an effective, automatic surrogate for costly human judgments.

\begin{table*}[t!]
\centering
\begin{adjustbox}{max width=0.9\textwidth}
\begin{tabular}{llrrrrrr}
\toprule
\textbf{Metric} & \textbf{Task} & \textbf{Gen. PPL} & \textbf{Zipf Coef.} & \textbf{REP} & \textbf{Distinct-4} & \textbf{Self-BLEU} & \textbf{\name} \\
\midrule
       Human-like/BT &      Web text &           $0.810$ &             $0.833$ &     $-0.167$ &             $0.738$ &            $0.595$ &        \tabemph{$\mathbf{0.952}$} \\
      Interesting/BT &      Web text &           $0.643$ &             $0.524$ &     $-0.143$ &             $0.524$ &            $0.405$ &        \tabemph{$\mathbf{0.810}$} \\
         Sensible/BT &      Web text &           $0.738$ &             $0.690$ &     $-0.071$ &             $0.595$ &            $0.524$ &        \tabemph{$\mathbf{0.857}$} \\
       \% Disc. Acc. &          News &           $0.468$ &             $0.595$ &      $0.792$ &             $0.653$ &            $0.516$ &        \tabemph{$\mathbf{0.956}$} \\
        \% Disc. Acc. &       Stories &           $0.643$ &             $0.643$ &      $0.250$ &             $0.750$ &            $0.857$ &        \tabemph{$\mathbf{0.893}$} \\
\bottomrule
\end{tabular}
 \end{adjustbox}
\caption{
Correlation of various similarity measures with human judgments when available, and the accuracy of a trained discriminator otherwise. 
``BT'' denotes the Bradley-Terry score for a pairwise human evaluation (\S~\ref{subsec:human-eval}). Boldfaced/highlighted numbers indicate highest correlation in each row. 
We observe that \name has the highest correlation with human evaluation and discriminator accuracy.
}
\label{tab:mauve:expt:gold-correlation-main}
\end{table*}

\myparagraph{\name correlates with learned discriminators}
We also measure the quality of generations by how well a trained model (a discriminator) can distinguish between real and generated text~\cite{lopezpaz2017revisiting}. 
We report the test accuracy of a binary classifier trained to discriminate between machine and human text; a lower discrimination accuracy implies that the generation is harder to distinguish from human text.
We report the accuracy of 
Grover mega as the discriminator for the news generations
as it produced the highest discrimination accuracy~\cite{zellers2019grover}
while we use GPT-2 large for the story domain.
As seen in Table~\ref{tab:mauve:expt:gold-correlation-main}, \name correlates the highest with the discrimination accuracy ($0.96$ for news and $0.89$ for stories) among all comparison measures. 
Computing the discrimination accuracy for each (model, decoder) pair requires fine-tuning a separate model, which is particularly expensive for large models such as Grover-mega. 
\name, on the other hand, does not require any training.

\section{Conclusion}
\label{sec:discussion}

We presented \name, an automatic measure of the gap between neural text and human text for open-ended text generation.
\name measures the area under a divergence curve, formalizing and summarizing a spectrum of errors that capture phenomena present in machine and human-generated text.
\name correlated with human judgments and identified quality differences due to generation length, decoding algorithm, and model size, which prior metrics struggle to capture.
Automated metrics have driven advances in computer vision and many other machine learning domains. 
\name's principled foundation and strong empirical performance offers a similar path forward for open-ended text generation systems.
Extensions of \name to closed-ended tasks, such as summarization and translation, where generations must be compared to a fixed set of gold-standard references, are promising directions for future work.  %

\paragraph{Broader Impacts Statement}

\name rewards model text which resembles human-authored text. 
However, we acknowledge the risks of rewarding systems that try to mimic humans \cite{bender2021parrots}, which is the ultimate goal of open-ended text generation.
While our research is important for developing better language generators, we also encourage the community to pay attention to the development of technology that can reliably distinguish between human and machine text.
We leave the extension of our method towards building such systems to future work.

\paragraph{Acknowledgments}
Part of this work was done while Zaid Harchaoui was visiting the Simons Institute for the Theory of Computing, and while John Thickstun was at the University of Washington. This work was supported by NSF DMS-2134012, NSF CCF-2019844, NSF DMS-2023166, the DARPA MCS program through NIWC Pacific (N66001-19-2-4031), the CIFAR ``Learning in Machines \& Brains'' program, a Qualcomm Innovation Fellowship, and faculty research awards. 

\bibliographystyle{abbrvnat}
\bibliography{anthology,acl2021,custom}

\clearpage
\appendix

\addcontentsline{toc}{section}{Appendix} %
\part{Appendix} %
\parttoc %
\clearpage

\section{Divergence Curves and Mauve: Additional Details}
\label{supp:pr}

We discuss some aspects of the divergence curves alluded to in \S\ref{sec:mauve} and \S\ref{sec:background}. 
In particular, we discuss the following.
\begin{itemize}
    \item Appendix~\ref{supp:pr:pareto}: the Pareto optimality of the divergence curves, mentioned in a footnote in \S\ref{sec:mauve}.
    \item Appendix~\ref{supp:pr:genppl}: the connection between generation perplexity and the divergence curves as mentioned in \S\ref{sec:background}.
    \item Appendix~\ref{supp:pr:quantization}: a formal definition of the quantization which is first introduced in \S\ref{sec:mauve}, as well as an illustration.
    \item Appendix~\ref{supp:pseudocode}: the pesudocode for \name.
\end{itemize}

\subsection{Pareto Optimality of Divergence Curves} \label{supp:pr:pareto}
Here, we show the property of Pareto optimality of $\Ccal(P, Q)$.
We refer to the textbook~\cite{kobayashi2007mathematics} for more background on information theory and KL divergence.
The main property we will show in this section is the following. 

\begin{proposition} \label{prop:div-pareto-opt}
    Consider two distributions $P, Q$
    with finite support and a scaling
    constant $c > 0$. 
    Let $R_\lambda$ be such that 
    $\big(\e^{-c\, \kl(Q|R_\lambda)}, 
    \e^{-c\, \kl(P|R_\lambda)}\big) \in \Ccal(P, Q)$.
    Then, $R_\lambda$ is Pareto-optimal for 
    the pair of objectives
    $\big(\kl(Q|\cdot), \kl(P | \cdot)\big)$.
    In other words, there does {\em not} exist any distribution $R$ such that 
    $\kl(Q|R) < \kl(Q|R_\lambda)$
    and 
    $\kl(P|R) < \kl(P|R_\lambda)$ simultaneously.
\end{proposition}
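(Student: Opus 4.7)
The plan is to scalarize the two KL objectives into a single weighted sum, show that $R_\lambda$ is the unique minimizer of this scalarization, and then deduce Pareto optimality from a standard scalarization-implies-Pareto argument.

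Concretely, the first step is to establish the identity
\[
\lambda \kl(P|R) + (1-\lambda)\kl(Q|R) \;=\; \lambda \kl(P|R_\lambda) + (1-\lambda)\kl(Q|R_\lambda) + \kl(R_\lambda | R)
\]
valid for any distribution $R$ with $\supp R \supseteq \supp R_\lambda$. This follows by writing out both sides from the definition of KL, using the pointwise relation $\lambda P(x) + (1-\lambda)Q(x) = R_\lambda(x)$ on the $-\log R(x)$ terms, and adding/subtracting $\sum_x R_\lambda(x)\log R_\lambda(x)$ to produce the $\kl(R_\lambda | R)$ term.

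The second step is to observe that Gibbs' inequality gives $\kl(R_\lambda|R) \ge 0$, so the identity immediately shows $R_\lambda$ minimizes the scalarized objective $R \mapsto \lambda \kl(P|R) + (1-\lambda)\kl(Q|R)$. The third step, the Pareto conclusion, is then a one-line contradiction: if some $R$ achieved $\kl(P|R) < \kl(P|R_\lambda)$ and $\kl(Q|R) < \kl(Q|R_\lambda)$ simultaneously, multiplying by $\lambda$ and $1-\lambda$ (both strictly positive since $\lambda \in (0,1)$) and summing would produce a strictly smaller value of the scalarized objective, contradicting the minimality established in the previous step.

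The only real subtlety, and the thing I would be most careful about, is the support/finiteness bookkeeping. Because $\lambda \in (0,1)$, we have $\supp R_\lambda = \supp P \cup \supp Q$, so both $\kl(P|R_\lambda)$ and $\kl(Q|R_\lambda)$ are finite. Any candidate $R$ strictly improving both quantities must be absolutely continuous with respect to $R_\lambda$ (else one divergence is $+\infty$ and cannot strictly beat a finite value), which is exactly what is needed for the identity in step one to be well-defined. Once this technicality is handled, everything else is routine.
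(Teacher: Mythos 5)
Your proof is correct, and its engine is the same as the paper's: the three-point compensation identity $\lambda\,\kl(P|R) + (1-\lambda)\,\kl(Q|R) = \lambda\,\kl(P|R_\lambda) + (1-\lambda)\,\kl(Q|R_\lambda) + \kl(R_\lambda|R)$ is exactly the paper's Lemma~\ref{lem:tech:KL:3pt_weighted}, proved the same way (add and subtract $\sum_x R_\lambda(x)\log R_\lambda(x)$), and nonnegativity of the last term makes $R_\lambda$ the scalarized minimizer. Where you diverge is the final step: the paper invokes convexity of the two KL objectives together with standard scalarization theorems from multiobjective optimization to identify the \emph{entire} Pareto frontier with the set of weighted-sum minimizers, and then reads off Pareto optimality of $R_\lambda$; you instead use only the elementary direction --- a minimizer of a strictly positively weighted sum cannot be strictly dominated in both coordinates --- which needs no convexity and no external citation. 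For the proposition as literally stated, your route is leaner and fully self-contained; the paper's heavier machinery buys the converse inclusion (every Pareto-optimal $R$ is some $R_\lambda$), which is what the footnote in \S\ref{sec:mauve} actually claims when it says the divergence curve \emph{encodes} the Pareto frontier over all distributions $R$, not just mixtures. Your attention to the support bookkeeping (any strict improver $R$ must satisfy $\supp R \supseteq \supp R_\lambda = \supp P \cup \supp Q$, else one divergence is infinite) is a point the paper glosses over and is handled correctly.
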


\begin{proof}
Let $\Fcal(P, Q)$ be the Pareto frontier 
of $\big(\kl(Q|\cdot), \kl(P | \cdot)\big)$.
The convexity of $\kl(Q|\cdot), \kl(P | \cdot)$
allows us to compute the Pareto frontier $\Fcal(P, Q)$ 
exactly by minimizing linear combinations of the objectives.
Concretely, we have from~\cite[Thm. 3.4.5, 3.5.4]{miettinen2012nonlinear} that
\begin{align*}
    \Fcal(P, Q) = 
    \Big\{
        \big(\kl(P|R_\lambda^\star),
        \kl(P|R_\lambda^\star)\big) \,:\,
        \lambda \in [0, 1]
    \Big\}
\end{align*}
where
\[
    R_\lambda^\star \in 
    \argmin_R\{ \lambda\, \kl(Q|R) + 
    (1-\lambda) \kl(P|R) \} \,.
\]
We invoke the next lemma to 
show that $R_\lambda^\star = \lambda P + (1-\lambda) Q$ to complete the proof.
\end{proof}

\begin{lemma} \label{lem:tech:KL:3pt_weighted}
	Let $P, Q, S$ be discrete distributions with finite support.
	For any $\lambda \in [0, 1]$ and $\bar \lambda = 1- \lambda$, 
	letting $R_\lambda = \lambda P + \bar \lambda Q$, 
	we have the identity
		\begin{align*}
		\lambda \, \kl(P|S) + \bar \lambda \, \kl(Q|S)
		= 
		\lambda \, \kl(P| R_\lambda) + \bar\lambda\,  \kl(Q|R_\lambda)
			+ \kl(R_\lambda|S) \,.
	\end{align*}
	Consequently, we have that 
	\[
	R_\lambda \in 
	\argmin_S \left\{\lambda \, \kl(P|S) + \bar \lambda\, \kl(Q|S) \right\} \,.
	\]
\end{lemma}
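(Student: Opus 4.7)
The plan is to prove the identity by direct expansion using the definition of KL divergence, exploiting the classical ``three-point'' decomposition trick that inserts $R_\lambda$ into the log ratio. The ``consequently'' statement will then follow immediately from nonnegativity of KL divergence.

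For the identity, I would start from the left-hand side and expand each KL divergence as $\kl(A|S) = \sum_x A(x) \log(A(x)/S(x))$. For every $x$ in the (common) support, I would split
\[
    \log\frac{P(x)}{S(x)} = \log\frac{P(x)}{R_\lambda(x)} + \log\frac{R_\lambda(x)}{S(x)},
\]
and analogously for $Q$. After substituting, the terms that multiply $\log(R_\lambda(x)/S(x))$ have combined coefficient $\lambda P(x) + \bar\lambda Q(x)$, which equals $R_\lambda(x)$ by the very definition of $R_\lambda$. Summing these gives precisely $\kl(R_\lambda|S)$, while the leftover pieces assemble into $\lambda \kl(P|R_\lambda) + \bar\lambda \kl(Q|R_\lambda)$, yielding the desired equality.

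For the ``consequently'' claim, observe that in the right-hand side of the identity only the last term $\kl(R_\lambda|S)$ depends on $S$; the first two terms are fixed once $P$, $Q$, and $\lambda$ are given. By Gibbs' inequality, $\kl(R_\lambda|S) \ge 0$ with equality if and only if $S = R_\lambda$ on the support of $R_\lambda$. Therefore the minimizer of $\lambda \kl(P|S) + \bar\lambda \kl(Q|S)$ over $S$ is $R_\lambda$.

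The main obstacle is purely bookkeeping around zero-probability atoms: if $R_\lambda(x) = 0$ then necessarily $P(x) = Q(x) = 0$ (since both are nonnegative and sum with positive weights to $R_\lambda(x)$), so those terms drop out consistently on both sides; if $S(x) = 0$ while $R_\lambda(x) > 0$ then both sides are $+\infty$ under the standard conventions $0 \log 0 = 0$ and $a \log(a/0) = +\infty$ for $a > 0$, so the identity holds vacuously. Once the supports are handled carefully, the manipulation is a one-line algebraic rearrangement, so I do not anticipate any deeper technical difficulty.
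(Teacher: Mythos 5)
Your proposal is correct and follows essentially the same route as the paper's proof: both insert $R_\lambda$ into the log ratio (the paper does this by adding and subtracting $\sum_i R_{\lambda,i}\log R_{\lambda,i}$, which is the same three-point decomposition), observe that the coefficient of $\log(R_\lambda/S)$ is exactly $R_\lambda$, and conclude the minimization claim from nonnegativity of $\kl(R_\lambda|S)$. Your explicit treatment of zero-probability atoms is a minor refinement the paper omits, but it does not change the argument.
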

\begin{proof}
	By adding and subtracting 
	$\sum_{i} R_{\lambda, i} \log(R_{\lambda, i})$, we get,
		\begin{align*}
		\lambda \, \kl(P|S) + \bar \lambda \, \kl(Q|S) 
		&= \sum_{i}  \lambda P_i \log P_i + \bar\lambda  Q_i \log Q_i -
				R_{\lambda, i} \log S_i  \\
	    &= \sum_i  \lambda P_i \log \frac{P_i}{R_{\lambda, i}} 
	        + \bar \lambda Q_i \log\frac{Q_i}{R_{\lambda, i}} + R_{\lambda, i} \log\frac{R_{\lambda, i}}{S_i} \\
	    &= \lambda \, \kl(P| R_\lambda) + \bar\lambda \, \kl(Q|R_\lambda)
			+ \kl(R_\lambda|S) \,.
	\end{align*}
	The first two terms are independent 
	of $S$ and the last term is minimized 
	at $S = R_\lambda$.
\end{proof}

\myparagraph{Connection to Divergence Frontiers~\citep{djolonga2020precision}}
The Pareto frontier $\Fcal(P, Q)$ 
of $\big(\kl(Q|\cdot), \kl(P | \cdot)\big)$ 
(defined in the proof of Proposition~\ref{prop:div-pareto-opt})
coincides exactly with the notion 
of the {\em inclusive divergence frontier}, 
as defined by \citet{djolonga2020precision}.
It follows that the 
inclusive KL divergence frontier 
is related to the divergence curve we have defined as, 
\begin{align*}
    \Fcal(P, Q) 
    = \Big\{  &
        \left(
        c^{-1} \log t_1^{-1},
        c^{-1} \log t_2^{-1}
        \right) 
        \, : \, 
          (t_1, t_2) \in \Ccal(P, Q)
    \Big\} \,.
\end{align*}

\subsection{Generation Perplexity and Divergence Curves}
\label{supp:pr:genppl}
Recall that the generation perplexity of a text distribution $P$ is the perplexity of this distribution under an external language model $R$. That is, 
\[
    T_{\text{ppl}}(P) = \exp\left( - \expect_P[\log R(\xv)] \right) \, .
\]
For simplicity, we write the perplexity using base $\e$ rather than base $2$. 
Then, the difference in generation perplexity between $P$ and $Q$ is given by 
\begin{align*}
    \big|T_{\text{ppl}}(P) - T_{\text{ppl}}(Q)\big| 
    &= \Big|\exp\big(- \expect_P[\log R(\xv)]\big) - \exp\big(- \expect_Q[\log R(\xv)] \big) \Big| \\
    &= \Big|\exp\big(H(P) + \kl(P | R)\big) - \exp\big(H(Q) + \kl(Q | R)\big) \Big| \,,
\end{align*}
where $H(P) = -\expect_P[\log P(\xv)]$ is the Shannon entropy of $P$. 
When $H(P) = H(Q) = \log C$, i.e., both $P$ and $Q$ are equally diverse, then 
\[
     \big|T_{\text{ppl}}(P) - T_{\text{ppl}}(Q)\big|
     = C\,  \Big|\exp\big(\kl(P | R)\big) - \exp\big(\kl(Q | R)\big) \Big| \,.
\]
When $R = \lambda P + (1-\lambda)Q$, this is proportional to the difference between the reciprocal of two coordinates of {\em one} point on the divergence curve.
When $R$ is some other model, then $\big(\exp(-\kl(Q| R)), \exp(-\kl(P|R))\big)$ corresponds to the coordinates of a point enclosed within the divergence curve and the coordinate axes. Indeed, this is because the divergence curve encodes the Pareto frontier of $(\kl(P | \cdot), \kl(Q| \cdot))$.

When $H(P) \neq H(Q)$, the difference in the generation perplexity can be written as a function of 
some point $\big(\exp(-\kl(Q| R)), \exp(-\kl(P|R))\big)$ that is enclosed within the divergence curve and the axes:
\[
     \big|T_{\text{ppl}}(P) - T_{\text{ppl}}(Q)\big|
     =  \Big| C_1 \, \exp\big(\kl(P | R)\big) - C_2 \, \exp\big(\kl(Q | R)\big) \Big| \,,
\]
where $C_1 = \exp(H(P))$ and $C_2 = \exp(H(Q))$.

\subsection{Quantization: Definition} \label{supp:pr:quantization}

We formally define the quantization of a distribution. 

Consider a distribution $P$ over some space $\Xcal$. Consider a partition $S = (S_1, \cdots, S_k)$ of $\Xcal$, i.e., $\cup_{j=1}^k S_j = \Xcal$ and $S_i \cap S_j = \varnothing$ if $i\neq j$.
Quantizing the distribution $P$ over partitions $S$ gives us a multinomial distribution $\tilde P_S$ over $k$ elements. Concretely, we have, 
\[
    \tilde P_S(j) = P(S_j) \, .
\]
This histogram is a classical example of a quantizer. 
While the quantized distribution $\tilde P_S$ is a discrete multinomial distribution, it can be viewed as a piecewise constant approximation to $P$, similar to the histogram. This is visualized in Figure~\ref{fig:mauve:quant} for a two-dimensional example.
In our setting, $\Xcal$ is the space of  encoded representation of text, i.e., a Euclidean space $\reals^d$. We use data-dependent quantization schemes such as $k$-means and lattice quantization of a learned feature representation.
In one-dimension, quantization is equivalent to computing a histogram. Hence, we casually use the term ``bin'' to refer to a partition. 

\subsection{Pseudocode for \name} 
\label{supp:pseudocode}

Algorithm~\ref{alg:mauve} shows the pseudocode for computing \name. It consists of the following steps:
\begin{itemize}
    \item The first step is to embed the sampled text using an external language model $M$. In our experiments, we use GPT-2 large~\cite{radford2019language}.
    \item The second step is to quantize the embeddings. We primarily use $k$-means, which returns the cluster memberships $C_P$ and $C_Q$. 
    \item The third step is to form the quantized distributions from the cluster memberships from \eqref{eqn:approx}. This amounts to counting the number of points in each cluster contributed by $P$ and $Q$. 
    \item The next step is to build the divergence curve. The full divergence curve  \eqref{eq:div-curve} is a continuously parameterized curve for $\lambda \in (0, 1)$. For the sake of computation, we take a discretization $\Lambda$ of $[0, 1]$:
    \ifnum \value{arxiv}>0 
    {
    \begin{align} \label{eq:div-curve:appendix}
    \hat\Ccal(P, Q) =
    \Big\{ \big(\exp(-c\, \kl(Q|R_\lambda)), \exp(-c\, \kl(P|R_\lambda)) \big)
    \, :\, 
    R_\lambda = \lambda P + (1-\lambda)Q,\,
    \lambda \in \Lambda
    \Big\} \,.
    \end{align}
    }
    \else
    {    
    \begin{align} \label{eq:div-curve:appendix}
    \hat\Ccal(P, Q) =
    \Big\{ \big(\exp(-c\, \kl(Q|R_\lambda)), \exp(-c\, \kl(P|R_\lambda)) \big)
    \, :\, 
    \begin{matrix}
    R_\lambda = \lambda P + (1-\lambda)Q, \\
    \lambda \in \Lambda
    \end{matrix}
    \Big\} \,.
    \end{align}
    }
    \fi
    We take a uniform grid $\Lambda = \{1 / n, 2/n, \cdots, (n-1)/n\}$ with $n$ points.
    \item The last step is to estimate the area under $\hat \Ccal(\tilde P, \tilde Q)$ using numerical quadrature. 
\end{itemize}

\begin{algorithm}[t]
\DontPrintSemicolon
\KwInput{
Human text $\{\xv_i^P\}_{i=1}^N$, model text $\{\xv_i^Q\}_{i=1}^{N'}$, 
number of clusters $k$, embedding model $M$, discretization $\Lambda$ of $[0, 1]$.}
\KwOut{$\name(P, Q)$.}
\tcp*[l]{Embed the samples}
$\{M(\xv_i^P)\}_{i=1}^N$, $\{M(\xv_i^Q)\}_{i=1}^{N'}\gets \texttt{embed}\left(M, \{\xv_i^P\}_{i=1}^N, \{\xv_i^Q\}_{i=1}^{N'} \right)$ \vspace{0.2em}\;
\tcp*[l]{Cluster embeddings jointly}
$C_P, C_Q = \texttt{quantize}\left(\{M(\xv_i^P)\}_{i=1}^N, \{M(\xv_i^Q)\}_{i=1}^{N'}\right) $\vspace{0.2em}\\
\tcp*[l]{Form quantized distributions by counting cluster assignments}
$\tilde P \leftarrow \texttt{count}(C_P)/N$,\,\, $\tilde Q \leftarrow \texttt{count}(C_Q)/N'$\vspace{0.2em}\;
\tcp*[l]{Build the divergence curve}
Compute $\hat \Ccal(\tilde P, \tilde Q)$
from \eqref{eq:div-curve:appendix}
for $\lambda \in \Lambda$
\vspace{0.2em} \;
\tcp*[l]{Compute \name using numerical quadrature}  
\Return{$\texttt{area}\left(\hat\Ccal(\tilde P, \tilde Q)\right)$}
\caption{Pseduocode to compute \name}
\label{alg:mauve}
\end{algorithm}

\section{Software Package}\label{supp:software}

We illustrate the use of the accompanying Python package, available on GitHub\footnote{
\url{https://github.com/krishnap25/mauve}}
and installable via pip\footnote{
\url{https://pypi.org/project/mauve-text/}}
as \texttt{pip install mauve-text}.

\begin{lstlisting}[caption=Compute \name from text]
from mauve import compute_mauve

p_text = ... # list of strings representing human distribution P
q_text = ... # list of strings representing model distribution Q

# Obtain feature representation, quantize it and then compute MAUVE
out = compute_mauve(p_text=p_text, q_text=q_text, 
                    device_id=0,  # use GPU 0 for featurization
                    max_text_length=256 # truncate text to 256 tokens
                    )
print('MAUVE(P, Q) =', out.mauve)

# Plot the divergence curve
import matplotlib.pyplot as plt  
plt.plot(out.divergence_curve[:, 0], out.divergence_curve[:, 1])

# Visualize quantized versions of P and Q
import numpy as np
idxs = np.argsort(out.p_hist)[::-1]
sample_p = np.random.multinomial(n=1000, pvals=out.p_hist[idxs])
sample_q = np.random.multinomial(n=1000, pvals=out.q_hist[idxs])

x = np.arange(out.p_hist.shape[0])
plt.bar(x, sample_p, color='blue', alpha=0.3, label='P')
plt.bar(x, sample_q, color='red', alpha=0.3, label='Q')
plt.legend()

\end{lstlisting} %
\section{Experiments: Setup}
\label{supp:expt_details}

Here, we provide the full details of the experiments in \S\ref{sec:experiments}. In particular, the outline of this appendix is as follows.
\begin{itemize}
    \item Appendix~\ref{sec:supp:domains}: the three task domains considered in the expeirments.
    \item Appendix~\ref{sec:supp:training_decoding}: training and decoding hyperparameters for each of these tasks.
    \item Appendix~\ref{sec:supp:mauve-hyperparams}: hyperparameters of \name.
    \item Appendix~\ref{sec:supp:automatic-details}: details of other automatic comparison measures we consider.
    \item Appendix~\ref{sec:supp:expt:misc-details}: other details (software, hardware, running time, etc.).
\end{itemize}

\subsection{Task Domains} 
\label{sec:supp:domains}

We consider an open-ended text generation task under three domains: web text, news and stories.
As summarized in Table~\ref{table:expt:details}, we follow a slightly different setting for the task in each domain:

\myparagraph{Web text Generation}
The goal of this task is to generate articles from the publicly available analogue of the Webtext dataset\footnote{
\url{https://github.com/openai/gpt-2-output-dataset}
} 
using pretrained GPT-2 models for various sizes.
At generation time, we use as prompts the first $35$ tokens of each of the $5000$ articles from the Webtext test set, keeping maximum generation length to $1024$ tokens (which corresponds, on average, to around $750$ words).
For comparison with human text, we use the corresponding human-written continuations from the test set (up to a maximum length of $1024$ tokens).

\myparagraph{News Generation}
Under this task, the goal is to generate the body of a news article, given the title and metadata (publication domain, date, author names).
We use a Transformer-based \cite{vaswani2017attention} causal language model, Grover \cite{zellers2019grover}, which is similar to GPT-2, but tailored to generating news by conditioning on the metadata of the article as well. 
Our generations rely on pretrained Grover architectures of various sizes.
The generation prompt comprises the headline and metadata of $5000$ randomly chosen articles from the April2019 set 
of the RealNews dataset~\cite{zellers2019grover}, and the maximum article length was $1024$ tokens.
We reuse the publicly available Grover generations\footnote{
available at \url{https://github.com/rowanz/grover/tree/master/generation_examples}
} for our evaluation.

\myparagraph{Story Continuation}
Given a situation and a (human-written) starting of the story as a prompt,
the goal of this task is to continue the story. Here, we use a GPT-2 medium model fine-tuned for one epoch on the WritingPrompts dataset~\cite{fan2018heirarchical}. 
We use as generation prompts the first $50$ tokens of $5000$ randomly chosen samples of the test set of WritingPrompts.
The machine generations are allowed to be up to $512$ tokens long. The 
corresponding test examples, truncated at $512$, tokens are used as human-written continuations.

\subsection{Training and Decoding Hyperparameters}
\label{sec:supp:training_decoding}

We use size-based variants of Transformer language models \cite{vaswani2017attention} for training each task (domain). 
At decoding time, we explore a text continuation setting, conditioned on a prompt containing human-written text.
All experiments were built using pretrained (and if applicable, finetuned) models implemented in the HuggingFace Transformers library \cite{wolf2020transformers}.
The tasks are summarized in Table~\ref{table:expt:details}.

\myparagraph{Story Continuation Finetuning}
We finetune GPT-2 medium 
on the training set of the WritingPrompts dataset using the cross entropy loss 
for one epoch over the training 
set with an effective batch size of $32$
and a block size of $512$. 
We use the default optimizer and learning rate schedules of 
the HuggingFace Transformers library, i.e., the Adam optimizer with a learning rate of $5\times 10^{-5}$.

\myparagraph{Decoding Hyperparameters}
We consider pure sampling (i.e., ancestral sampling from the
model distribution), greedy decoding (i.e., choosing the argmax token recursively), and nucleus sampling~\cite{holtzman2019curious}
with parameter $p \in \{0.9, 0.92, 0.95, 0.99\}$ for web text generation
and story continuation, and $p \in \{0.9, 0.92, 0.94, 0.96, 0.98\}$ for 
news generation.

\subsection{\name Hyperparameters} \label{sec:supp:mauve-hyperparams}
\name's hyperparameters are the scaling constant $c$,
the embedding model $M$,
and the quantization algorithm (including the size of the quantized distribution). 

\subsubsection{Scaling Constant}
Note that \name's dependence on $c$ is order-preserving
since the map $x \mapsto \exp(-cx)$ is strictly monotonic in $x$. 
That is, if $\name_{c_1}(P, Q_1) > \name_{c_1}(P, Q_2)$,
then it holds that $\name_{c_2}(P, Q_2) > \name_{c_2}(P, Q_2)$
for all scaling constants $c_1, c_2 > 0$. In other words, 
the choice of the scaling constant affects the numerical value of \name 
but leaves the relative ordering between different models unchanged. 
We choose $c=5$ throughout because it allows for a meaning comparison between the numerical values of \name; Appendix~\ref{supp:expt-results-approx}
gives the values of \name for various values of $c$.

\subsubsection{Embedding Model}
We compute text embeddings from 
the GPT-2 large model. 
We find in Appendix~\ref{supp:expt-results-approx} that feature representations obtained from other large transformer models such as RoBERTA~\cite{liu2019roberta}
also achieves similar results. 

\subsubsection{Quantization}
We experiment with three quantization algorithms. 

\myparagraph{\name-$k$-means}
We first run PCA on the 
data matrix obtained from concatenating 
the
hidden state representations
of the human text and model text. 
We keep $90\%$ of the explained variance 
and normalize each datapoint to have unit 
$\ell_2$ norm.
We then run $k$-means with FAISS
for a maximum of $500$ iterations 
for $5$ repetitions; 
the repetition with the best objective value is used for the quantization. We quantize the human text distribution and the model text distribution by a histogram obtained from cluster memberships.
We vary the number of clusters in 
$\{100, 250, 500, 1000\}$.
Too few clusters makes the distributions seem closer than they actually are while too many clusters leads to many empty clusters (which makes 
all distributions seem equally far away). 
Yet, we find in Appendix~\ref{supp:expt-results-approx} 
that \name with all these values of $k$ correlate strongly with each other; we use as default $k=500$ clusters as it is neither too small 
nor too large.

\myparagraph{\name-DRMM}
We use the code released by the authors of \cite{hamalainen2020deep}.%
\footnote{
\url{https://github.com/PerttuHamalainen/DRMM}
}
We take $10$ components per layer 
and $3$ layers for a total of $1000$ components. 
We train the DRMM for $20$ epochs using the hyperparameters
suggested by the authors, i.e., 
a batch size of $64$
with a learning rate
\[
\gamma_t = \gamma_0 \min\{1,(2-2t/T)^2\}\,,
\]
where $T$ is the total number of updates
and the initial learning $\gamma_0 = 0.005$.
That is, the learning rate is set to a constant for the 
first half of the updates and then 
annealed quadratically.
For more details, 
see \cite[Appendix C]{hamalainen2020deep}.

\myparagraph{\name-Lattice}
We use the code provided by 
the authors of \cite{sablayrolles2018spreading}.\footnote{
\url{https://github.com/facebookresearch/spreadingvectors}}
We train a $4$-dimensional feature 
representation of the hidden states for 
for $200$ epochs using 
the triplet loss of 
\cite{sablayrolles2018spreading},
so that the learnt feature representations
are nearly uniformly distributed. 
We use a $2$-layer multilayer
perceptron with batch normalization 
to learn a feature representation.
We train this MLP for $200$ epochs with hyperparameters
suggested by the authors, i.e., 
a batch size of $64$ and an initial
learning rate of $0.1$. 
The learning rate is cut to $0.05$ 
after half the training and $0.01$ 
after $75\%$ of the training. 

The learnt feature representations 
are then
quantized using the lattice spherical
quantizer into $744$ bins.
This work as follows:
let $S_r$ denote the integral points 
of the unit sphere of radius $r=\sqrt{50}$ in
$\reals^4$.
A hidden state vector $x$ is run through
the trained MLP $f$ to get its feature
representation $f(x)$. 
Next, $f(x)$ is quantized to
$
\argmin_{u \in S_r} \left\| f(x) - {u}/{r}\right\|_2^2 
$.

\subsection{Automatic Comparison Measures: Details and Hyperparameters} \label{sec:supp:automatic-details}
We now describe the other automatic comparison measures we compared \name to, as well as their hyperparameters.
\begin{itemize}[itemsep=0cm,leftmargin=0.5cm]
    \item \textbf{Generation Perplexity (Gen. PPL.)}: We compute the perplexity of the generated text under the GPT-2 large model. 
    \item \textbf{Zipf Coefficient}: we report the slope of the best-fit line 
        on log-log plot of a rank versus unigram frequency plot. Note that the Zipf coefficient only depends on unigram count statistics and is invariant to, for instance, permuting the generations. We use the publicly available implementation of \cite{holtzman2019curious}.\footnote{ \url{https://github.com/ari-holtzman/degen/blob/master/metrics/zipf.py}}
    \item \textbf{Repetition Frequency (Rep.)}: The fraction of generations which devolved into repetitions. Any generation which contains at least two contiguous copies of the same phrase of any length appearing at the 
    end of a phrase is considered a repetition. We consider repetitions at the token level. 
    \item \textbf{Distinct-$n$}: The fraction of distinct $n$-grams from all possible $n$-grams across all generations. We use $n = 4$.
    \item \textbf{Self-BLEU}: Self-BLEU  is  calculated  by  computing  the  BLEU  score  of  each  generations against all other generations as references. We report the Self-BLEU using $4$-grams. 
    This operation is extremely expensive, so we follow the protocol of \cite{holtzman2019curious}: sample $1000$ generations and compute the BLEU against all other $4999$ generations. A lower Self-BLEU score implies higher diversity. This operation takes around $7$ hours to compute on a single core of an Intel i9 chip (see hardware details in the next subsection). 
    \item \textbf{Discriminator Accuracy}: We train a binary classifier to classify text as human or not. A smaller discrimination accuracy means that model text is harder to distinguish from human text. A separate classifier is trained for each model and decoding algorithm pair. For the story continuation task, we 
    train a classification head on a frozen GPT-2 large model
    using the logistic loss.
    We use $25\%$ of the data as a test set and the rest for training;
    a regularization parameter is selected with $5$-fold cross validation. For the news dataset, we follow the protocol of \cite{zellers2019grover}, i.e., a Grover mega model 
    finetuned with a binary classification head. Results with other 
    discriminators are reported in Appendix~\ref{supp:expt-results}. 
\end{itemize}

\subsection{Miscellaneous Details} \label{sec:supp:expt:misc-details}

\myparagraph{Software}
We used Python 3.8, PyTorch 1.7 and HuggingFace Transformers 4.3.2.

\myparagraph{Hardware}
All the experiments requiring a GPU (finetuning, sampling generations and computing embeddings) were performed on 
a machine with $8$ Nvidia Quadro RTX GPUs ($24$G memory each)
running CUDA 10.1.
Each only used one GPU at a time.
On the other hand, non-GPU jobs such as computation of \name and 
self-BLEU were run on a workstation with 
Intel i9 processor (clock speed: $2.80\text{GHz}$)  with $32$ virtual cores and $126$G of memory.

\myparagraph{Evaluation time for \name}
Computation of \name using $k$-means
with $5000$ generations
takes $1-3$ minutes on {\em a single core} of an Intel i9 CPU (clock speed: $2.80\text{GHz}$),
using cached hidden state representations from a GPT-2 large (which are available during generation).
On the other hand, 
\name-DRMM takes $1.75$ hours on 
a single CPU core
while \name-Lattice runs in about $5$ minutes on a single TITAN Xp GPU. 
\name-$k$-means and \name-DRMM can also run much faster on multiple CPU cores and can leverge GPUs although we did not use these features.  %
\section{Experiments: Additional Results} \label{supp:expt-results}

We elaborate on the results in \S\ref{sec:experiments}, including the results for the other domains. The outline is as follows.
\begin{itemize}
    \item Appendix~\ref{sec:supp:expt-size_decoding}: full results across model size and decoding (elaborating on \S\ref{ssec:expr-identify}).
    \item Appendix~\ref{sec:supp:expt-length}: full results across text length (elaborating on \S\ref{ssec:expr-identify}).
    \item Appendix~\ref{supp:expt-results-approx}: study of approximations in \name (elaborating on \S\ref{ssec:expr-approx}).
    \item Appendix~\ref{sec:supp:expt:misc}: some miscellaneous plots such use of \name for hyperparameter tuning.
\end{itemize}
Note that \S\ref{subsec:human-eval} is elaborated on in Appendix~\ref{sec:a:human-eval}.

\begin{table*}[t!]
\centering
\begin{adjustbox}{max width=\textwidth}
\begin{tabular}{lllllllrl}
\toprule
\bf GPT-2  Size    &   \bf    Decoding             &      \bf              Gen. PPL &           \bf     Zipf Coef. &               \bf        Rep. &       \bf         Distinct-4 &           \bf      Self-BLEU & \bf Human/BT($\uparrow$) & \bf \name($\uparrow$) \\
\midrule
\multirow{4}{*}{small} & Sampling &          $101.880_{0.627}$ &           $0.926_{0.001}$ &           $0.001_{0.000}$ &           $0.941_{0.001}$ &           $0.327_{0.003}$ &                      $-27.52$ &            $0.589_{0.018}$ \\
      & Greedy &                    $1.224$ &                   $1.037$ &                   $0.942$ &                   $0.072$ &           $0.465_{0.000}$ &                 --            &                    $0.008$ \\
      & Nucleus, $0.9$ &           $23.788_{0.144}$ &           $1.012_{0.002}$ &           $0.010_{0.001}$ &           $0.859_{0.002}$ &           $0.436_{0.004}$ &              $-15.78$ &            $0.878_{0.006}$ \\
      & Adversarial &            \tabemph{$\mathbf{12.554}$} &           $1.073$ &           $0.006$ &           $0.365$ &           $0.525$ &             -- &            $0.043$ \\
\midrule
\multirow{4}{*}{medium} & Sampling &          $129.263_{0.798}$ &           $0.872_{0.001}$ &           $0.001_{0.000}$ &           $0.953_{0.001}$ &           $0.281_{0.002}$ &              $-30.77$ &            $0.373_{0.010}$ \\
      & Greedy &                    $1.241$ &                   $0.978$ &                   $0.903$ &                   $0.091$ &                   $0.415$ &                 --        &                    $0.012$ \\
      & Nucleus, $0.9$ &           $21.073_{0.134}$ &           \tabemph{$\mathbf{0.957}_{0.001}$} &           $0.005_{0.001}$ &  \tabemph{$\mathbf{0.884}_{0.001}$} &  \tabemph{$\mathbf{0.402}_{0.003}$} &                  $-3.43$ &            $0.915_{0.006}$ \\
      & Adversarial &           \tabemph{$\mathbf{12.554}$} &           $1.006$ &           $0.005$ &           $0.381$ &           $0.444$ &            -- &            $0.044$ \\
\midrule
\multirow{4}{*}{large} & Sampling &           $30.080_{0.196}$ &           $0.930_{0.002}$ &  \tabemph{$\mathbf{0.002}_{0.001}$} &           $0.916_{0.001}$ &           $0.358_{0.001}$ &                $-6.93$ &            $0.845_{0.010}$ \\
      & Greedy &                    $1.232$ &                   $0.983$ &                   $0.881$ &                   $0.100$ &                   $0.413$ &       --          &                    $0.012$ \\
      & Nucleus, $0.95$ &  $13.499_{0.058}$ &  $0.967_{0.002}$ &           $0.006_{0.001}$ &           $0.870_{0.001}$ &           $0.412_{0.002}$ &             $12.55$ &            $0.936_{0.003}$ \\
      & Adversarial &           \tabemph{$\mathbf{12.554}$} &           $0.965$ &           $0.005$ &           $0.395$ &           $0.429$ &        -- &            $0.035$ \\
\midrule[0.03em]
\multirow{4}{*}{xl} & Sampling &           $31.886_{0.447}$ &           $0.930_{0.001}$ &           $0.002_{0.001}$ &           $0.913_{0.001}$ &           $0.360_{0.003}$ &            $8.97$ &            $0.882_{0.006}$ \\
      & Greedy &                    $1.278$ &                   $0.975$ &                   $0.859$ &                   $0.115$ &                   $0.417$ &               --        &                    $0.016$ \\
      & Nucleus, $0.95$ &           $14.143_{0.043}$ &           $0.966_{0.002}$ &           $0.005_{0.000}$ &           $0.868_{0.001}$ &           $0.413_{0.002}$ &      \tabemph{$\mathbf{15.66}$} &   \tabemph{$\mathbf{0.940}_{0.006}$} \\
      & Adversarial &           \tabemph{$\mathbf{12.554}$} &           $0.986$ &           $0.005$ &           $0.397$ &           $0.448$ &      -- &            $0.057$ \\
\midrule
Human &         n/a          &                   $12.602$ &                   $0.952$ &                   $0.002$ &                   $0.878$ &                   $0.382$ &         $47.25$ &            --                \\
\bottomrule
\end{tabular}
 \end{adjustbox}
\caption{Comparison measures across different model sizes, and decoding approaches for web text generations. 
Subscripts indicate the s.d. across 5 runs for the sampling-based methods; greedy decoding, being deterministic, always returns the same value for a given model.
For nucleus sampling, we show the best hyperparameter value from $\{0.9, 0.92, 0.95, 0.99\}$ as per \name.
The column ``Human/BT'' gives the Bradley-Terry score obtained from a pairwise human evaluation (\S\ref{subsec:human-eval}).
Boldfaced numbers indicate best performance according to the measure, or closest to the human reference, when applicable.
\name shows that larger models perform better, across decoding approaches; moreover, nucleus sampling is the best decoding algorithm as per \name.
}
\label{tab:mauve:expt:webtext-appendix}
\end{table*}

\begin{table*}[t!]
\centering
\begin{adjustbox}{max width=0.95\textwidth}
\begin{tabular}{llrrrrrrr}
\toprule
\bf Grover Size      &    \bf  Decoding              &       \bf     Gen. PPL &     \bf    Zipf Coef. &     \bf           Rep. &    \bf     Distinct-4 &      \bf    Self-BLEU &  \bf \% Disc. Acc.($\downarrow$) & \bf \textsc{mauve}($\uparrow$) \\
\midrule
\multirow{3}{*}{base} & Sampling &           $37.505$ &           $0.942$ &           $0.002$ &           $0.882$ &           $0.419$ &                     $99.925$ &                    $0.700$ \\
      & Greedy &            $1.413$ &           $1.038$ &           $0.518$ &           $0.081$ &           $0.548$ &                    $100.000$ &                    $0.005$ \\
      & Nucleus, $0.96$ &           $23.064$ &           $0.974$ &           $0.006$ &  \tabemph{$\mathbf{0.847}$} &           $0.462$ &                     $99.950$ &                    $0.701$ \\
\midrule[0.03em]
\multirow{3}{*}{large} & Sampling &           $27.796$ &  ${0.946}$ &  \tabemph{$\mathbf{0.002}$} &           $0.878$ &           $0.429$ &                     $99.450$ &                    $0.794$ \\
      & Greedy &            $1.575$ &           $1.012$ &           $0.366$ &           $0.124$ &           $0.504$ &                    $100.000$ &                    $0.005$ \\
      & Nucleus, $0.98$ &           $20.792$ &           \tabemph{$\mathbf{0.962}$} &           $0.002$ &           $0.859$ &           $0.450$ &                     $98.475$ &                    $0.750$ \\
\midrule[0.03em]
\multirow{3}{*}{mega} & Sampling &           $22.656$ &           $0.950$ &           $0.001$ &           $0.879$ &           $0.427$ &                     $97.300$ &                    $0.808$ \\
      & Greedy &            $1.796$ &           $1.003$ &           $0.316$ &           $0.176$ &           $0.500$ &                    $100.000$ &                    $0.005$ \\
      & Nucleus, $0.96$ &  \tabemph{$\mathbf{14.834}$} &           $0.972$ &           $0.003$ &           $0.848$ &  \tabemph{$\mathbf{0.469}$} &            \tabemph{$\mathbf{88.675}$} &           \tabemph{$\mathbf{0.813}$} \\
\midrule
Human &      n/a             &           $15.356$ &           $0.956$ &           $0.002$ &           $0.842$ &           $0.473$ &         --                     &        --                    \\
\bottomrule
\end{tabular}
 \end{adjustbox}
\caption{News generation evaluation across different Grover model sizes, and decoding approaches.
For nucleus sampling, we show the best hyperparameter value from $\{0.9, 0.92, 0.94, 0.96, 0.98\}$
as per \name.
Disc. Acc. denotes the discrimination accuracy (\%) of a Grover mega model trained to distinguish human text
from machine text generated with the model and decoding algorithm of each row. 
Boldfaced numbers indicate performance closest to the human reference when applicable, or the best performance according to the measure.
\name favors nucleus sampling over ancestral sampling and greedy decoding.
}
\label{tab:mauve:expt:grover-appendix}
\end{table*}

\begin{table*}[t!]
\centering
\begin{adjustbox}{max width=0.95\textwidth}
\begin{tabular}{llllllll}
\toprule
\textbf{Decoding} &                   \textbf{Gen. PPL} &                \textbf{Zipf Coef.} &                       \textbf{REP} &                \textbf{Distinct-4} &                 \textbf{Self-BLEU} & \textbf{\% Disc. Acc. ($\downarrow$)} & \textbf{\textsc{mauve}($\uparrow$)} \\
\midrule
Sampling          &           $38.983_{0.143}$ &  \tabemph{$\mathbf{1.066}_{0.002}$} &  \tabemph{$\mathbf{0.001}_{0.000}$} &           $0.833_{0.001}$ &           $0.518_{0.003}$ &              $0.781_{0.004}$ &            $0.905_{0.010}$ \\
Nucleus, $0.9$  &           $15.433_{0.042}$ &           $1.201_{0.002}$ &           $0.006_{0.001}$ &           $0.719_{0.001}$ &           $0.637_{0.002}$ &              $0.752_{0.004}$ &            $0.887_{0.008}$ \\
Nucleus, $0.92$ &           $17.422_{0.060}$ &           $1.179_{0.002}$ &           $0.004_{0.001}$ &           $0.742_{0.001}$ &           $0.620_{0.003}$ &              $0.720_{0.006}$ &            $0.901_{0.005}$ \\
Nucleus, $0.95$ &  \tabemph{$\mathbf{21.599}_{0.127}$} &           $1.147_{0.002}$ &           $0.003_{0.000}$ &  \tabemph{$\mathbf{0.775}_{0.002}$} &           $0.589_{0.005}$ &     \tabemph{$\mathbf{0.686}_{0.006}$} &   \tabemph{$\mathbf{0.920}_{0.004}$} \\
Top-$100$  &           $16.527_{0.041}$ &           $1.252_{0.001}$ &           $0.002_{0.000}$ &           $0.743_{0.001}$ &           $0.631_{0.001}$ &              $0.782_{0.002}$ &            $0.884_{0.007}$ \\
Top-$500$  &           $23.833_{0.076}$ &           $1.153_{0.001}$ &           $0.001_{0.000}$ &           $0.794_{0.001}$ &  \tabemph{$\mathbf{0.576}_{0.002}$} &              $0.697_{0.005}$ &            $0.919_{0.005}$ \\
Greedy            &                    $1.739$ &                   $1.362$ &                   $0.988$ &                   $0.101$ &                   $0.742$ &                      $0.997$ &                    $0.005$ \\
\midrule
Human             &                   $19.704$ &                   $1.101$ &                   $0.001$ &                   $0.783$ &                   $0.571$ &                              &                            \\
\bottomrule
\end{tabular}
 \end{adjustbox}
\caption{Story continuation evaluation across different and decoding approaches with GPT-2 medium.
Disc. Acc. denotes the discrimination accuracy (\%) of a classifier (a frozen GPT-2 large model with classification head) trained to distinguish human text from machine text generated with the decoding algorithm of each row. 
Boldfaced numbers indicate performance closest to the human reference when applicable, or the best performance according to the measure.
\name favors nucleus and top-$K$ sampling over ancestral sampling and greedy decoding.
}
\label{tab:mauve:expt:wp-appendix}
\end{table*}

\begin{table*}[t!]
\centering
\begin{adjustbox}{max width=0.7\textwidth}
\begin{tabular}{llrrrrl}
\toprule
\bf GPT-2  Size    &   \bf    Decoding             &  \bf   SP($\uparrow$) &  \bf JS($\downarrow$) & \bf $\epsilon$-PPL($\downarrow$) & \bf Human/BT($\uparrow$) & \bf \name($\uparrow$) \\
\midrule
\multirow{4}{*}{small} & Greedy &                         $0.431$ &           $0.394$ &                   $1049.589$ &          --            &                    $0.008$ \\
& Sampling &             $0.653$ &           $0.425$ &                     $19.401$ &             $-27.52$ &            $0.589_{0.018}$ \\
      & Nucleus, $0.9$ &         $0.652$ &           $0.414$ &                     $25.938$ &             $-15.78$ &            $0.878_{0.006}$ \\
\midrule
\multirow{4}{*}{medium} & Greedy &                          $0.465$ &           $0.371$ &                    $708.057$ &              --        &                    $0.012$ \\
& Sampling &      $0.670$ &           $0.402$ &                     $14.631$ &             $-30.77$ &            $0.373_{0.010}$ \\
      & Nucleus, $0.9$ &              $0.670$ &           $0.391$ &                     $18.821$ &              $-3.43$ &            $0.915_{0.006}$ \\
\midrule
\multirow{4}{*}{large} & Greedy &                     $0.483$ &           $0.359$ &                    $580.020$ &            --          &                    $0.012$ \\
& Sampling &     $0.679$ &           $0.381$ &                     $12.658$ &              $-6.93$ &            $0.845_{0.010}$ \\
      & Nucleus, $0.95$ &    $0.679$ &           $0.374$ &                     $14.938$ &              $12.55$ &            $0.936_{0.003}$ \\
\midrule[0.03em]
\multirow{4}{*}{xl} & Greedy &                      $0.496$ &  \tabemph{$\mathbf{0.349}$} &                    $497.696$ &              --        &                    $0.016$ \\
    & Sampling &      \tabemph{$\mathbf{0.686}$} &           $0.369$ &            \tabemph{$\mathbf{11.412}$} &               $8.97$ &            $0.882_{0.006}$ \\
      & Nucleus, $0.95$ &            $0.686$ &           $0.363$ &                     $13.677$ &    \tabemph{$\mathbf{15.66}$} &   \tabemph{$\mathbf{0.940}_{0.006}$} \\
      & Adversarial &           n/a &           n/a &                     n/a &             -- &            $0.057$ \\
\bottomrule
\end{tabular}
 \end{adjustbox}
\caption{\name versus comparison measures based on language modeling (SP, JS and $\eps$-PPL) across different model sizes, and decoding approaches for web text generations. 
SP, JS and $\eps$-PPL are deterministic because they do not require generations from a decoding algorithm; moreover they cannot measure the quality of the adversarial decoding.
The column ``Human/BT'' gives the Bradley-Terry score obtained from a pairwise human evaluation (\S\ref{subsec:human-eval}).
Boldfaced numbers indicate best performance according to the measure.
}
\label{tab:mauve:expt:webtext-app-LM}
\end{table*}

\begin{table}[t!]
\centering
{\small
\begin{tabular}{lcccccccc}
\toprule
\multirow{2}{*}{\bf Discriminator}
& \multicolumn{2}{c}{BERT}
& \multicolumn{3}{c}{GPT-2}
& \multicolumn{3}{c}{Grover} \\
\cmidrule(lr){2-3}
\cmidrule(lr){4-6}
\cmidrule(lr){7-9}
& Base & Large & Small & Medium & Large & Base & Large & Mega \\
\midrule
\bf Correlation & 0.803 & 0.817 & 0.831 & 0.829 & 0.822 & 0.928 & 0.956 & 0.925 \\
\bottomrule
\end{tabular}

 }
\vspace{0.5mm}
\caption{Spearman rank correlation between the discrimination accuracy
for various discriminators and \name for news generation. 
All entries have a $p$-value of $<2\times 10^{-6}$.
}
\label{tab:mauve:expt:grover-disctype}
\end{table}

\begin{table*}[t!]
\centering
\begin{adjustbox}{max width=\textwidth}
{ \small
\begin{tabular}{lccccccc}
\toprule
{\bf Decoding}
& Greedy & Beam $b=4$ & \begin{tabular}{c} Beam $b=4$ + \\  no $4$-gram repeat \end{tabular} & Beam $b=8$ & \begin{tabular}{c} Beam $b=8$ + \\  no $4$-gram repeat \end{tabular} & Ancestral & Nucleus \\
\midrule
\bf Mauve & $0.008$ & $0.021$ & $0.026$ & $0.366$ & $0.341$ & $0.589_{0.02}$ & \tabemph{} $\mathbf{0.878_{0.007}}$  \\
\bottomrule
\end{tabular} }
\end{adjustbox}
\caption{
\name and beam search: we compare beam search with beam sizes $b=4,8$ (with and without allowing $4$-gram repetitions) with other decoding algorithms of Table~\ref{tab:mauve:expt:webtext-appendix} for web text generation with GPT-2 small. The subscript denotes the standard deviation over $5$ random seeds, and is omitted for the deterministic greedy decoding and beam search.
}
\label{tab:mauve:expt:beam-search}
\end{table*}

\begin{table*}[t!]
\centering
{ \small
\begin{tabular}{llcc}
\toprule
GPT-2 size         &     Decoding               &  
RoBERTa   &  GPT-2    \\
\midrule
small & Sampling &    0.174 &  0.589 \\
         & Greedy &   0.056 & 0.008 \\
         & Nucleus, $0.9$ &    0.723 &  0.878 \\
\midrule
medium & Sampling &    0.292 &   0.372 \\
         & Greedy &    0.114 & 0.011 \\
         & Nucleus, $0.9$ &    0.891 &  0.915 \\
\midrule
large & Sampling &    0.684 &  0.845 \\
         & Greedy &    0.125 & 0.012 \\
         & Nucleus, $0.95$ &     0.920 &  0.936 \\
\midrule
xl & Sampling &     0.780 &  0.881 \\
         & Greedy &     0.170 & 0.016 \\
         & Nucleus, $0.95$ &    \tabemph{\textbf{0.947}} &  \tabemph{\textbf{0.940}} \\
\bottomrule
\end{tabular} %
}
\caption{
Comparison of \name computed with dense embeddings from RoBERTa~\cite{liu2019roberta} large
with the default GPT-2 large.
Boldfaced numbers indicate best performance according to the measure. 
The two feature representations have a Spearman rank correlation of $0.993$.
See Figure~\ref{fig:mauve:feature_type}
for a visual representation of a subset of this table.
}
\label{tab:mauve:expt:bert-features-appendix}
\end{table*}

\begin{figure*}[t]
\includegraphics[width=0.97\linewidth]{fig/length/nucleus_main.pdf}
\includegraphics[width=0.97\linewidth]{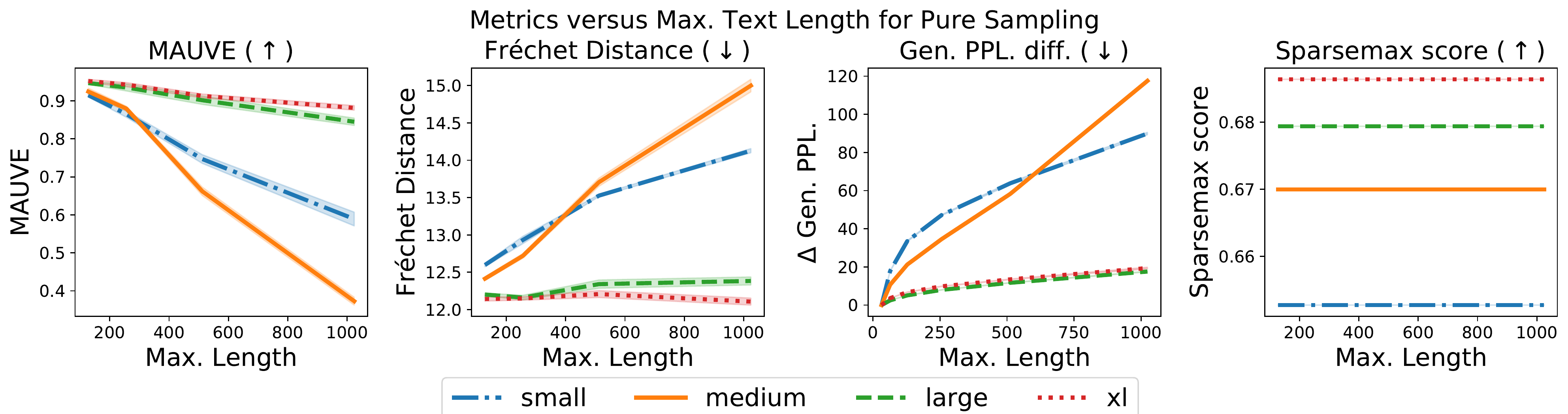}
\includegraphics[width=0.97\linewidth]{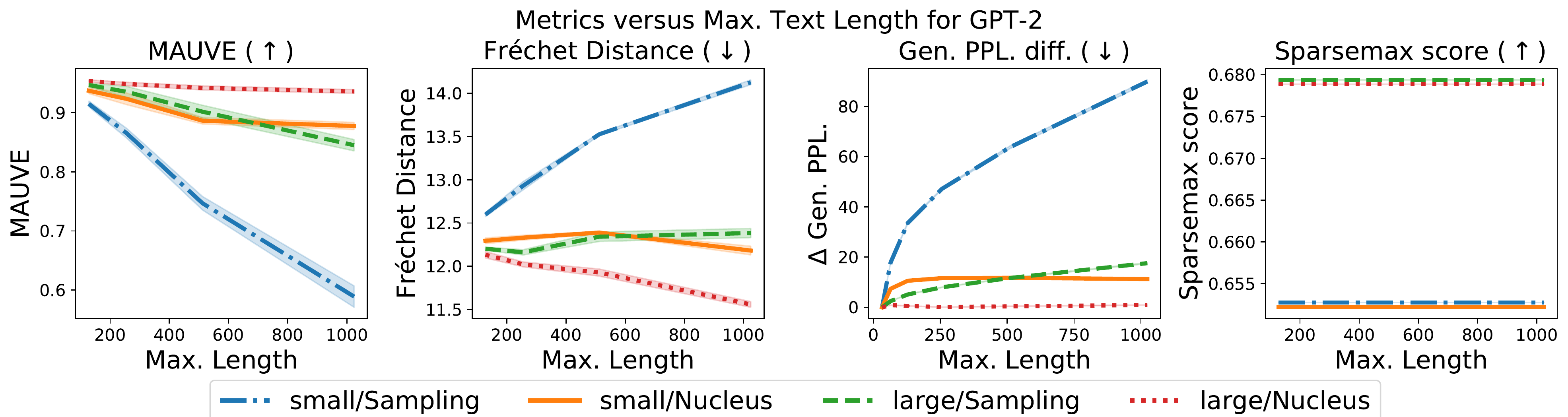}
\caption{Generation quality versus maximum generation length as per various comparison measures for 
web text generation with GPT-2. 
We expect the quality of the generation to degrade as the maximum length of the text (both machine and human-written) increases. 
\name is the only comparison measure which correctly shows this behavior across all models and decoding algorithms.
The shaded area denotes one standard deviation over generations from 5 random seeds.
}
\label{fig:a:expt:length:all}
\end{figure*}

\subsection{Comparison of Measures Across Model Size and Decoding} \label{sec:supp:expt-size_decoding}

Full versions of \autoref{tbl:decoding} and \autoref{tbl:modelsize}
can be found between \autoref{tab:mauve:expt:webtext-appendix} for statistics-based measures
and \autoref{tab:mauve:expt:webtext-app-LM} for the language modeling measures. 
The corresponding tables
for the news and story domains are
Tables~\ref{tab:mauve:expt:grover-appendix}
and~\ref{tab:mauve:expt:wp-appendix} respectively. 

\textbf{Note}: The main paper and the appendix treat the statistics-based measures differently (Gen. PPL., Zipf, Self-BLEU, etc). 
For each statistic $T$, the main paper (Tables~\ref{tbl:decoding} and~\ref{tbl:modelsize}) 
gives the difference $|T(Q)-T(P)|$ between the statistic on model text and human text, 
while in Tables~\ref{tab:mauve:expt:webtext-appendix},~\ref{tab:mauve:expt:grover-appendix},~\ref{tab:mauve:expt:wp-appendix} of the supplement, we show $T(Q)$ in the row corresponding to $Q$ and $T(P)$ in the row corresponding to human. 

\myparagraph{Results}
From Table~\ref{tab:mauve:expt:webtext-appendix},
we observe that among the decoding
approaches, nucleus sampling achieves the best
\name followed by sampling and lastly by greedy
decoding. This trend is consistent with the fraction
of distinct 4-grams. On the other hand, in comparison with the perplexity of human text, Gen. PPL
is too high for sampling and too low for greedy
decoding; it does not give us a way to directly
compare which of these two is better. \name,
however, rates greedy decoding as far worse than
ancestral sampling. This is consistent with the empirical observation that greedy decoding produces
extremely degenerate text~\cite{welleck2020neural}.
Adversarial perplexity sampling produces unintelligible text which nevertheless has perfect Gen.
PPL, thus demonstrating its unsuitability for 
as a comparison measure. 

The results in Tables~\ref{tab:mauve:expt:grover-appendix} and~\ref{tab:mauve:expt:wp-appendix} for the news and story domains are qualitatively similar to the webtext domain. 
\name, like discrimination accuracy, rates larger models as better and nucleus sampling as better than ancetral sampling and greedy decoding. An exception to this rule is Grover large, where \name thinks ancestral sampling is better than nucleus sampling. The statistics-based measures Zipf coefficient, Repetition and the fraction of distinct $4$-grams all prefer smaller Grover sizes.

Next we turn to the language modeling comparison measures
in Table~\ref{tab:mauve:expt:webtext-app-LM}. 
JS consistently favors greedy decoding,
which produces far worse text than other decoding algorithms. Likewise, $\eps$-PPL favors ancestral sampling, which also produces somewhat degenerate
text~\cite{holtzman2019curious}, while SP appears to
be unable to distinguish between ancestral sampling and nucleus sampling. This makes SP, JS and $\eps$-PPL unsuitable to compare generated text to human text.

While most measures behave nearly as expected
across model architectures (larger models produce
better generations for the same decoding algorithm), Self-BLEU prefers generations from GPT-2 medium over GPT-2 large or xl. This indicates
that while measures based on word/token statistics
are important diagnostic tools, they do not capture
the quality of generated text entirely.

\myparagraph{Discriminator Accuracy: Choice of Discriminator}
We show the Spearman rank correlation between the discriminator accuracy 
for various choices of the discriminator in  Table~\ref{tab:mauve:expt:grover-disctype}.
The results show that \name has a strong correlation with the discrimination accuracy for a variety of discriminators, including one based on a masked language model, BERT \cite{devlin2018bert}. 
This correlation is particular strong for the Grover-based discriminators. 
We note that evaluating any one model and decoding algorithm pair requires fine-tuning a separate model. This can be particularly expensive for the larger models such as Grover mega. \name, on the other hand, is inexpensive in comparison.

\myparagraph{Beam Search}
We also calculate \name for beam search in Table~\ref{tab:mauve:expt:beam-search}. \name is able to quantify the qualitative observations of \citet{holtzman2019curious}: beam search produces extremely degenerate text, but slightly better than greedy decoding. Disallowing repetition of $4$-grams substantially improves the quality of the produced text, since the most glaring flaw of beam search is that the text is highly repetitive. However, the quality of the resulting text is still far worse than produced by ancestral sampling, and hence also nucleus sampling.

\subsection{Behavior Across Text Length} \label{sec:supp:expt-length}

We now turn to the plot of comparison measures versus text length in Figure~\ref{fig:a:expt:length:all}. We expect the quality of the generation to degrade as the maximum length of the text (both machine and human-written) increases. 

\myparagraph{Comparison Measures}
Figure~\ref{fig:a:expt:length:all} plots
\name, Gen. PPL. and the Sparsemax score~\cite{martins2020sparse}. 
In addition we also plot the Fr\'echet distance, a variant of the
Fr\'echet Inception Distance (FID)~\cite{heusel2017gans}
which is the de facto standard evaluation metric for GANs in computer vision. The FID is computed as the 
Wasserstein-2 distance between Gaussians fit to the feature representation from using an Inception network; we adapt it to our setting by using embeddings from GPT-2 large instead.
For Gen. PPL., we plot the difference of Gen. PPL., i.e., 
$|T_{\text{ppl}}(Q_{\le \ell}) - T_{\text{ppl}}(P_{\le \ell})| $,
$T_{\text{ppl}}(P_{\le \ell})$ denotes the perplexity of the text $\xv\sim P$ 
truncated at a length of $\ell$. The perplexity is measured using GPT-2 large model as the external language model.

\myparagraph{Results}
\name indeed shows this expected behavior. 
However, the Fr\'echet distance~\cite{heusel2017gans} actually decreases for nucleus sampling for all GPT-2 sizes and ancestral sampling for GPT-2 xl. 
This shows that it is not suitable as an evaluation metric for text. 
 While Gen. PPL. mostly agrees with \name about quality versus text length, we observe non-monotonic behavior for nucleus sampling with GPT-2 small and large. 
Finally, sparsemax score~\cite{martins2020sparse} does not depend on the samples generated and is therefore independent of the maximum text length.

\subsection{Effect of Approximations of \name} \label{supp:expt-results-approx}
We expand upon the approximation results from the main paper in 
\S\ref{ssec:expr-approx}.

\myparagraph{Embedding Model}
Table~\ref{tab:mauve:expt:bert-features-appendix} shows \name compute with RoBERTa large in addition to the default GPT-2 large. 
We restrict the maximum text length of the RoBERTa model to $256$ BPE tokens, since RoBERTa cannot handle sequences of length $1024$ tokens.
We observe similar trends with both: larger models are rated higher and nucleus sampling is preferred over ancestral sampling while greedy decoding is rated very low. 
The Spearman rank correlation between \name computed with the two feature representations is $0.993$, indicating that \name is robust to feature representations. We observe that RoBERTa penalizes ancestral sampling more while rating greedy decoding higher across all model sizes. We leave a study of the biases induced by different feature representations to future work.

\myparagraph{Quantization Algorithm} 
We compare different choices of the quantization to $k$-means with $k=500$, which is our default. 
The Spearman rank correlation between \name computed with $k$-means for $k$ ranging from $100$ to $5000$ correlates nearly perfectly with that of $k=500$. In particular, the Spearman correlation is exactly $0.99$ or $1.00$. Likewise, \name computed with DRMM or lattice quantization has a near-perfect Spearman correlation of at least $0.99$ with $k$-means. 
While the actual numerical value of \name could vary with the quantization algorithm, these results show that the {\em rankings induced by various variants of \name are nearly identical}.

\begin{wrapfigure}{r}{0.5\textwidth} 
    \centering
    \includegraphics[scale=0.4]{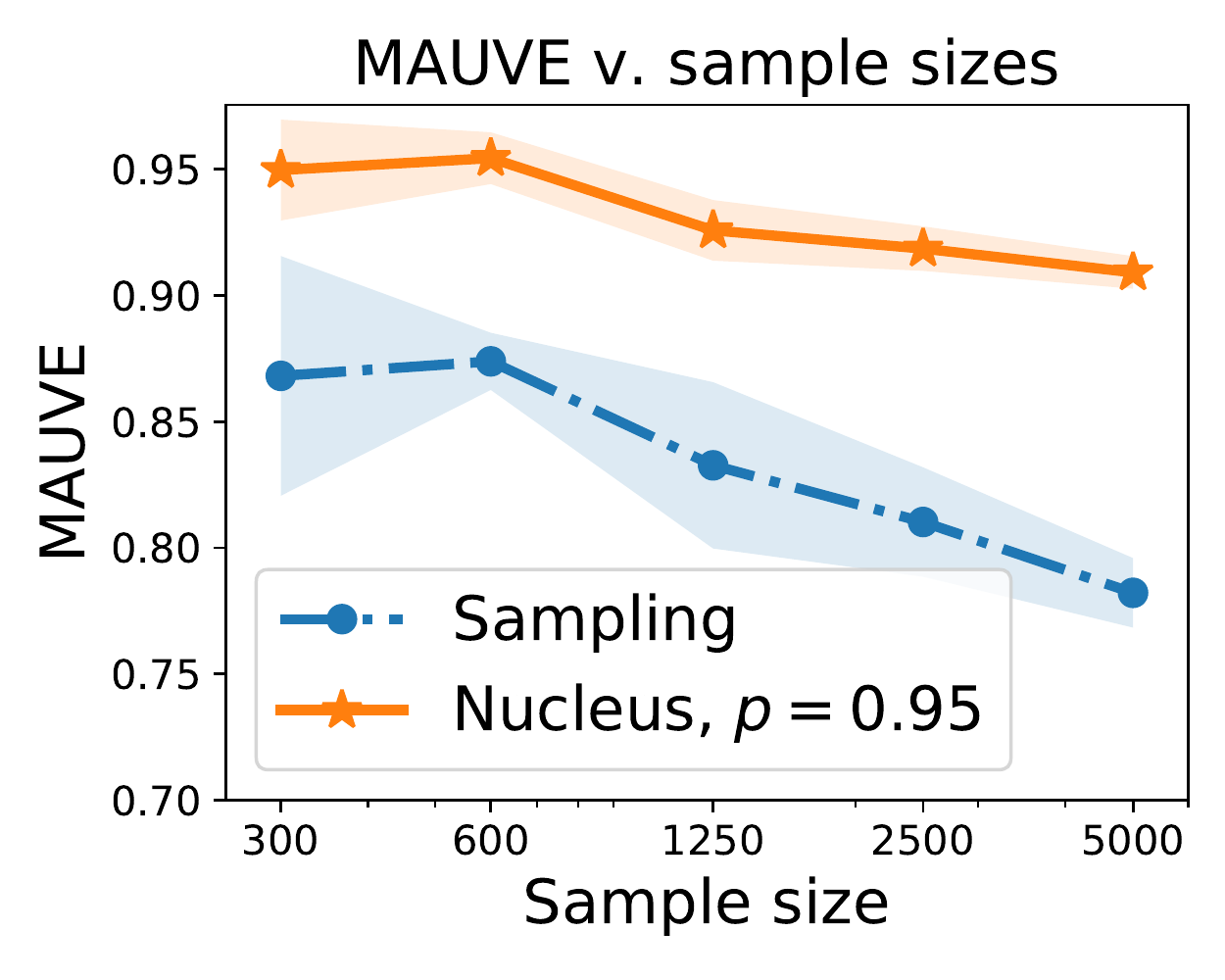}
    \caption{Effect of the sample size on \name.}
    \label{fig:mauve:num_gen_dependence}
\end{wrapfigure}

See Figure~\ref{fig:a:expt:model_selection:appendix} (Left) for how \name-$k$-means depends on the number of clusters, $k$. If $k$ is too small ($k < 100$), all methods are scored close to $1$. If $k$ is too large $k > 2000$), all methods are scored close to $0$. There is a large region between these two extremes where \name-$k$-means is effective.

\myparagraph{Effect of Number of Generations}
Figure~\ref{fig:mauve:num_gen_dependence} plots 
the value of \name versus the sample size $n$,
with the number of clusters in $k$-means
chosen as $k=n/10$. 
We observe that a smaller sample size gives an optimistic estimate of \name; this is 
consistent with \cite[Prop. 8]{djolonga2020precision}.
We also note that a smaller sample size leads to a larger variance in \name.

\subsection{Miscellaneous Plots} \label{sec:supp:expt:misc}
Figure~\ref{fig:a:expt:model_selection:appendix} plots \name for nucleus and top-$K$ sampling for various values of the hyperparameters $p$ and $K$.

\begin{figure*}[t]
\centering
\includegraphics[width=0.33\linewidth]{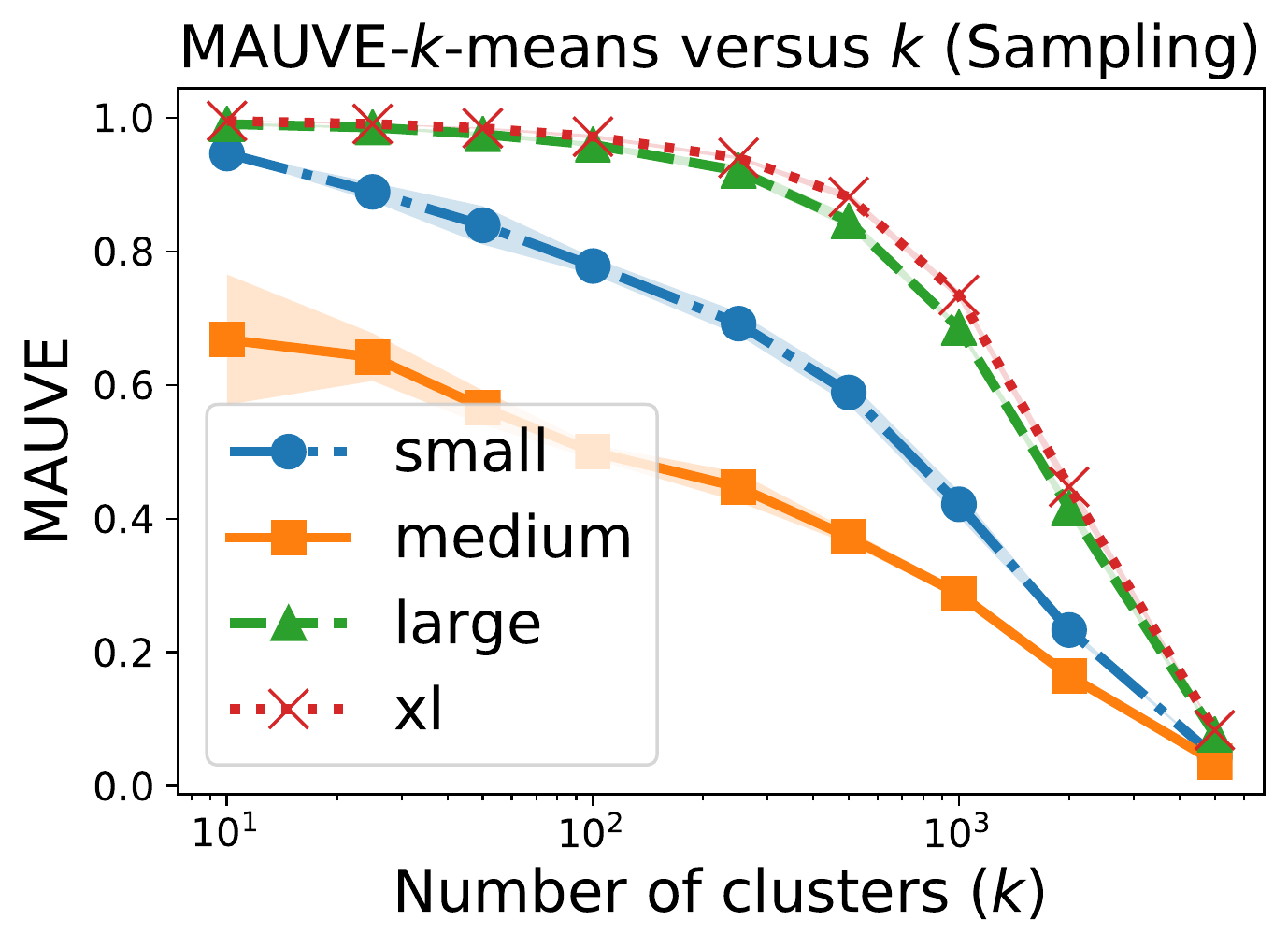}
\includegraphics[width=0.63\linewidth]{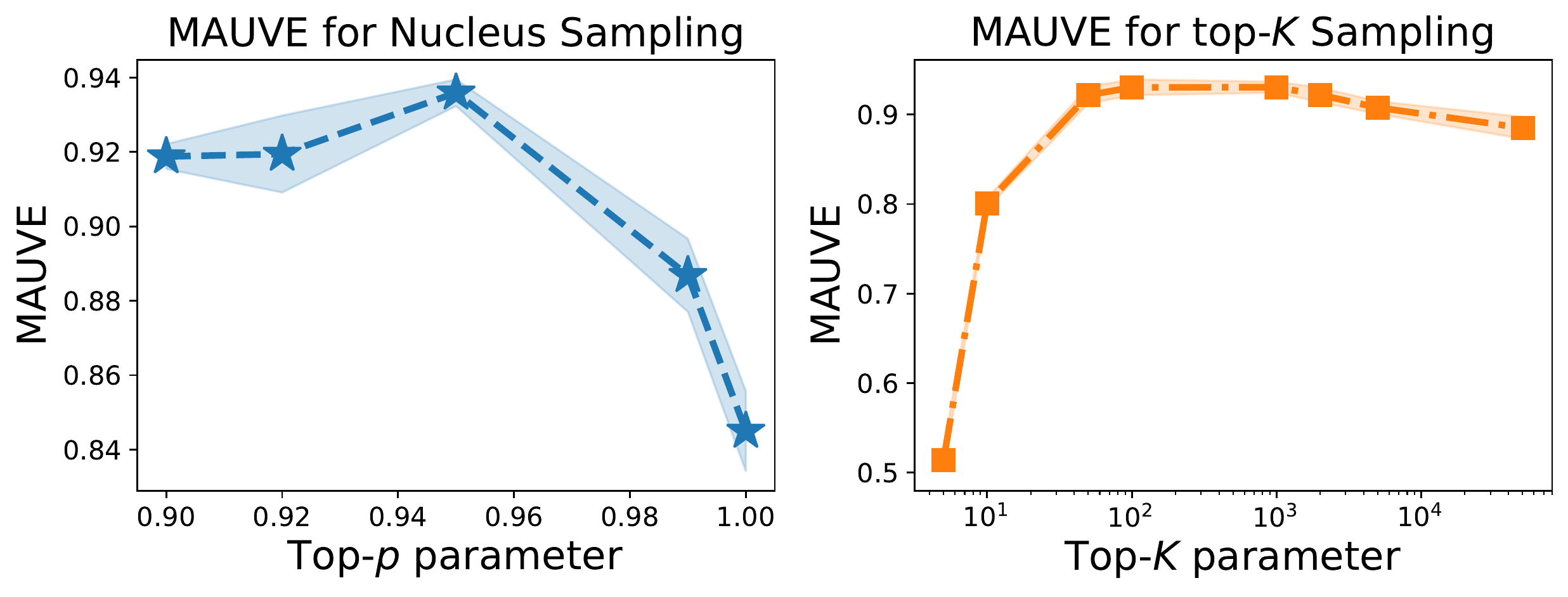}
\caption{
\textbf{Left}: \name-$k$-means for various values of the number of clusters $k$. We use $k=500$ as our default because it is neither too small (every method is scored close to $1$) nor too large (every method is scored close to $0$).
\textbf{Center \& Right}:
\name for nucleus and top-$K$ sampling 
for different values of $p$ and $K$
for GPT-2 large. 
\name rates nucleus sampling with $p=0.95$
and top-$K$ sampling with $100 \le K \le 1000$ as the best choices.
The shaded area denotes one s.d. over generations from 5 random seeds.
}
\label{fig:a:expt:model_selection:appendix}
\end{figure*} %
\section{Human Evaluation: Protocol and Full Results} \label{sec:a:human-eval}

Here, we describe the human evaluation protocol and results of \S\ref{subsec:human-eval} in detail. The outline for this section is 
\begin{itemize}
    \item Section~\ref{sec:a:human-eval:overview}: Overview of 
        the human evaluation setup.
    \item Section~\ref{sec:a:human-eval:bt}: Details of the statistical model we fit to the raw data. 
    \item Section~\ref{sec:a:human-eval:results}: Full results of the human evaluation. 
    \item Section~\ref{sec:a:human-eval:datasheet}: Additional details of the human evaluation protocol. \nocite{shimorina2021human}
\end{itemize}

\begin{figure*}[t!]
    \centering
    \fbox{\includegraphics[width=0.9\linewidth]{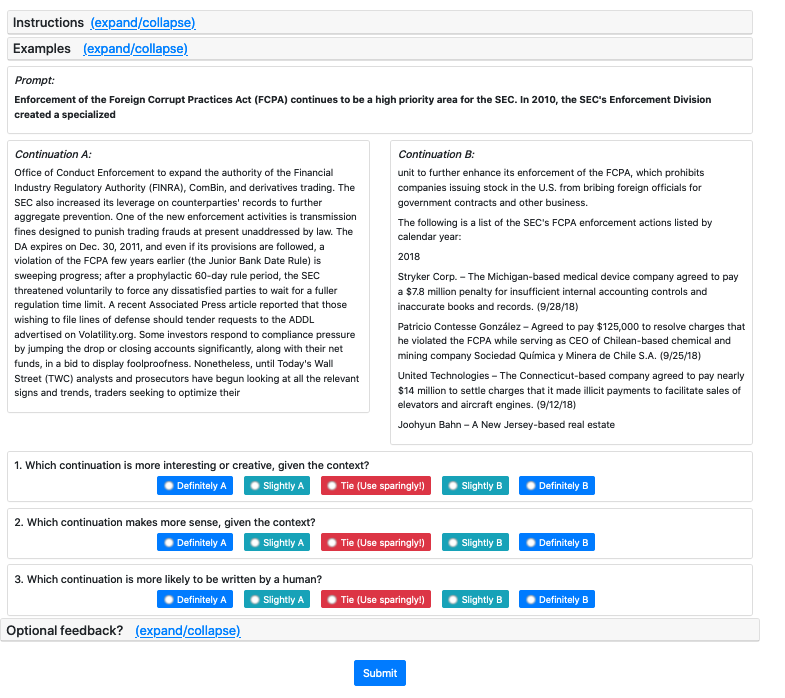}}
    \caption{Mechanical Turk interface for human evaluation.}
    \label{fig:mauve:human-eval-interface}
\end{figure*}

\subsection{Overview}
\label{sec:a:human-eval:overview}

We performed a human evaluation for web text generations 
where human annotators are instructed to select one from a pair of texts.
The pairs might come from human and machine text, or different sources of machine text; each is based on the same prompt for generation (recall that we obtained the prompt as a prefix from the human text).

The annotators were presented with a pairs of continuations of the same prompt and were instructed 
to choose which one is (a) more interesting, (b) more sensible, and, (c) more likely to be written by a human. Each question could have a different answer.

We considered all four GPT-2 model sizes with pure sampling and nucleus sampling. 
We collected $90$ annotations for each of the 
$8$ model-human pairs and ${8\choose 2}$
model-model pairs on the Amazon Mechanical Turk platform
using the interface shown in Figure~\ref{fig:mauve:human-eval-interface}.
We fit a Bradley-Terry model to obtain a ranking from 
the pairwise preferences of the crowd-workers. 
We report the correlation of \name with 
obtained Bradley-Terry scores.

\subsection{From Pairwise Preferences to Ranking: the Bradley-Terry Model}
\label{sec:a:human-eval:bt}
We compute the Bradley-Terry (BT) scores from the pairwise preferences 
obtained from the human evaluation along each of the three axes
interesting, sensible and more likely to be written by a human.

\myparagraph{Bradley-Terry Model Review}
Given $n$ players with scores $w_1, \cdots, w_n$, the 
the Bradley-Terry model~\cite{bt:book:1995} 
models the outcome of a head-to-head comparison
of any two players using a sigmoid%
\footnote{the scaling factor $100$ is arbitrary and does not change the model}
\[
    \text{Prob}(i \text{ beats } j) = \frac{1}{1 + \e^{-(w_i - w_j) / 100}} \,.
\]
The model also assumes the outcome of each head-to-head comparison of any pair of players is independent of all
other comparisons.
Note that the model is invariant to 
additive shifts of the scores, i.e., 
the model probabilities induced by scores 
$w_1 + C, \cdots, w_n + C$ is same as the that 
induced by $w_1, \cdots, w_n$ for any constant $C$.
For uniqueness, we normalize the scores so that their mean is $0$.

\myparagraph{Fitting the Model}
The Bradley-Terry model can be fit to data 
using Zermelo's algorithm~\cite{hunter2004mm}.
Suppose that we are given a dataset
of head-to-head comparisons summarized by 
numbers $N_{ij}$ denoting the number of times player $i$
has defeated player $j$. 
Then, the negative log-likelihood $\ell(w_1, \cdots w_n)$ 
of the data under the 
Bradley-Terry model can be written as
\[
    \ell(w_1, \cdots, w_n) = 
        -\sum_{i=1}^n\sum_{j=1}^n N_{ij} \log(1 + \e^{-(w_i - w_j) / 100}) \,.
\]
This is convex in the parameters $w_1, \cdots, w_n$ since 
the log-sum-exp function is convex. 
Zermelo's algorithm~\cite{hunter2004mm} can be used to compute the maximum likelihood estimate. 
Denote $\widetilde w_i = w_i / 100$. 
Starting from an initial estimate $\widetilde w_1^{(0)}, \cdots, \widetilde w_n^{(0)}$,
each iteration of Zermelo's algorithm performs the update
\[
     u_i^{(t)} = \log\left( \sum_{j\neq i} N_{ij} \right)
     - \log\left( \sum_{j\neq i} \frac{N_{ij} + N_{ji}}{
     \exp(\widetilde w_i^{(t)}) + \exp(\widetilde w_j^{(t)})} \  \right)
\]
followed by the mean normalization 
\[
    \widetilde w_i^{(t+1)} = u_i^{(t)} - \frac{1}{n} \sum_{j=1}^n u_j^{(t)} \,.
\]
\myparagraph{Processing Raw Data}
We collect the result of a head-to-head comparison using 5 options: Definitely A/B, Slightly A/B or a Tie. We combine Definitely A and Slightly A into a single category denoting that A wins, while ties were assigned to either A or B uniformly at random.

\begin{table*}[t!]
\centering
\begin{adjustbox}{max width=0.99\textwidth}
{\small
\begin{tabular}{llccc}
\toprule
      &          & BT/Human-like & BT/Interesting & BT/Sensible \\
\midrule
Human & {} &      $47.251$ &       $25.503$ &    $43.229$ \\
xl & Nucleus, $p=0.95$ &      \tabemph{$\mathbf{15.664}$} &       \tabemph{$\mathbf{23.046}$} &    \tabemph{$\mathbf{31.888}$} \\
      & Sampling &       $8.966$ &        $9.529$ &     $7.753$ \\
large & Nucleus, $p=0.95$ &      $12.553$ &        $6.785$ &     $8.781$ \\
      & Sampling &      $-6.935$ &       $-1.532$ &    $-7.106$ \\
medium & Nucleus, $p=0.9$ &      $-3.429$ &      $-12.824$ &    $-7.293$ \\
      & Sampling &     $-30.769$ &      $-34.323$ &   $-32.004$ \\
small & Nucleus, $p=0.9$ &     $-15.783$ &       $-0.697$ &    $-7.442$ \\
      & Sampling &     $-27.518$ &      $-15.487$ &   $-37.805$ \\
\bottomrule
\end{tabular}
 }
\end{adjustbox}
\caption{Fitted Bradley-Terry (BT) scores 
for each of the three axes 
rated by human annotators:
``Human-like'' denotes measures how likely 
the text is to be written by a human,
while ``Interesting'' and ``Sensible'' quantify how 
interesting or sensible the text is. 
The Spearman rank correlations between 
each of these scores are
($p$-value $\le 5\times 10^{-4}$ for each):
    Human-like and Interesting: $0.917$,
    Human-like and Sensible: $0.917$,
    Interesting and Sensible: $0.967$.
}
\label{tab:mauve:expt:webtext-human-eval-all}
\end{table*}

\begin{table*}[t!]
\centering
\begin{adjustbox}{max width=0.99\textwidth}
{\small 
\begin{tabular}{ccccccc}
\toprule
{} & Gen. PPL & Zipf Coef. &       REP & Distinct-4 & Self-BLEU & \textsc{mauve} \\
\midrule
BT/Human-like  &  $0.810$ &    $0.833$ &  $-0.167$ &    $0.738$ &   $0.595$ &        \tabemph{$\mathbf{0.952}$} \\
BT/Interesting &  $0.643$ &    $0.524$ &  $-0.143$ &    $0.524$ &   $0.405$ &        \tabemph{$\mathbf{0.810}$} \\
BT/Sensible    &  $0.738$ &    $0.690$ &  $-0.071$ &    $0.595$ &   $0.524$ &        \tabemph{$\mathbf{0.857}$} \\
\bottomrule
\end{tabular}
 }
\end{adjustbox}
\caption{Spearman rank correlation between 
the Bradley-Terry scores from the human evaluation 
and the various automatic comparison measures.
}
\label{tab:mauve:expt:webtext-human-eval-all-corr}
\end{table*}

\subsection{Full Results of the Human Evaluation}
\label{sec:a:human-eval:results}

\myparagraph{BT Model for Human Eval}
In our setting, each ``player'' is
a source of text, i.e., 
one human, plus, 
eight model and decoding algorithm pairs
(four model sizes GPT-2 small/medium/large/xl
coupled with pure sampling or nucleus sampling). 
We compute the BT score of each player 
as the maximum likelihood estimate of 
corresponding the parameters $w_1,\cdots, w_n$
based on head-to-head human evaluation data.

A higher BT score indicate a stronger preference 
from human annotators.
The BT scores are reported in Table~\ref{tab:mauve:expt:webtext-human-eval-all}.
The Spearman rank correlations between 
each of these scores are
($p$-value $\le 5\times 10^{-4}$ for each):
\begin{itemize}[itemsep=0cm,leftmargin=0.5cm]
    \item Human-like and Interesting: $0.917$,
    \item Human-like and Sensible: $0.917$,
    \item Interesting and Sensible: $0.967$.
\end{itemize}

\myparagraph{Interpreting BT scores}
The BT scores reported in Table~\ref{tab:mauve:expt:webtext-human-eval-all}
give us predictions from the sigmoid model above.
For example, consider the column ``BT/Human-like''.
The best model-generated text, 
GPT-2 xl with nucleus sampling, will 
lose to human text with probability $0.578$.
At the other end, GPT-2 small with nucleus sampling 
will lose to human text with probability $0.679$.
This shows that there is still much room for improvement in 
machine generated text. 

\myparagraph{Discussion}
In general, the BT scores from human evaluations 
and \name both indicate that 
(a) nucleus sampling is better than pure sampling for the same model size, and,
(b) larger model sizes are better for the same decoding algorithm.
There is one exceptions to this rule, as per both
the human evaluations and \name: 
GPT-2 small is better than GPT-2 medium for pure sampling.

\myparagraph{Correlation Between Comparison Measures}
We compare the Spearman rank correlation between the various automatic comparison measures and the BT scores from human evaluations in Table~\ref{tab:mauve:expt:webtext-human-eval-all-corr}.
In terms of being human-like, 
we observe that \name correlates the best ($0.95$) with human evaluations. 
While this is also the case for Zipf coefficient, we note that it is based purely on unigram statistics; it is invariant to the permutation of tokens, which makes it unsuitable to evaluate generations.

We note that \name does disagree with human evaluations on specific comparisons.
For instance, \name rates nucleus sampling with GPT-2 medium as being better than pure sampling from GPT-2 large and xl. The same is also the case with Gen. PPL.
We leave a detailed study of this phenomenon to future work.

\subsection{Additional Details}
\label{sec:a:human-eval:datasheet}

We describe more details for the human evaluation. The terminology below is taken from \cite{shimorina2021human}.

\myparagraph{Number of Outputs Evaluated}
We compare $9$ players: 
one player is ``human'', representing human-written text, 
whereas the other $8$ are text generated by the model using 
the first $35$ tokens of the corresponding human generation as a prompt. Each of the $8$ non-human players come from 
a GPT-2 model of different sizes (small, medium, large, xl)
and two decoding algorithms (pure sampling and nucleus sampling). 
We perform $90$ comparisons between each pair of players, 
so each player is evaluated $90 \times 8 = 720$ times. 

\myparagraph{Prompt Filtering}
We manually selected $1831$ out of $5000$ prompts which are well-formed English sentences from the webtext test set\footnote{
The webtext dataset is scraped from the internet 
and is {\em not} curated. It contains poor prompts
such as headers of webpages or error message, such as:
``Having trouble viewing the video? Try disabling any ad blocking extensions currently running on your browser''
or ``Front Page Torrents Favorites My Home My Galleries Toplists Bounties News Forums Wiki''.
We exclude such prompts as they are unsuitable for human evaluation.
}.
For every head-to-head comparison, we sample $90$
prompt without replacement and then sample the corresponding 
completions (for human-generated text, we use the test set of webtext). We only consider a pair of players for human evaluation if the generation from each player is 
at least $200$ BPE tokens long (and we truncate each generation
at a maximum length of $256$ BPE tokens).

\myparagraph{Number of Evaluators}
 $214$ unique evaluators 
participated in the evaluation. 
Of these, 
$11$ evaluators supplied at least $50$ annotations
$95$ evaluators supplied at least $10$
annotations.

\myparagraph{Evaluator Selection and Pay}
We conduct our human evaluation on Amazon Mechanical Turk. Since the task only requires elementary reading and understanding skills in English, we open 
the evaluations to non-experts. Each crowd-worker was paid 
$0.40$ per annotation. The pay was estimated based on a 
$\$16$/hour wage for the $85$\textsuperscript{th} percentile
of response times from a pilot study (which was approx. $98$ seconds per annotation). There evaluators are not previously known to the authors.

\myparagraph{Training and Instructions}
The evaluators were given instructions about the task 
and two detailed examples. No other training was provided due to the elementary nature of the task. The screenshots of these 
examples are given in Figure~\ref{fig:mauve:human-eval-examples} 
while the instructions read:
\begin{displayquote}
{\small

\textbf{Task Info}: We are studying how good AI models are at generating text on the internet. You are given a snippet of text from a random document on the internet, called the "prompt" or the "context", as well as and two continuations, A and B. One or both of these is written by an AI. You must choose (a) which of two continuations is more interesting, (b) which makes more sense given the prompt, and, (c) which is more likely to have been written by a human, as per your assessment.

\textbf{Guidelines}:
\begin{itemize}[itemsep=0cm,leftmargin=0.5cm]
    \item There are five choices for each question: Definitely A/B, Slightly A/B, or Tie. Please use the "Tie" option extremely sparingly! (No more than one in every ten pairs should be chosen as a tie along any of the three questions).
    \item The questions can have different answers! Some text is very creative or interesting, but it doesn't quite fit the prompt or make sense.
    \item Try to focus on quality over quantity. The text can be long but contain rambly gibberish.
    \item Don't worry if the text ends abruptly, or has other artifacts of the website downloading process (text like 'Advertisement' for instance).
    \item Please do your best, some of these are pretty challenging!
    \item Answering each question should take around 1.5 minutes on average, as per our estimation. We have calibrated the pay to be $\$16$ per hour with this speed.
\end{itemize}
}
\end{displayquote}

\myparagraph{Quality Control}
All annotations made in under $25$ seconds were excluded
for quality control
(the mean response time per annotation was $47$ seconds).

\myparagraph{Quality Criteria}
We use three quality criteria. The questions asked to the evaluators are (verbatim):
\begin{enumerate}[itemsep=0cm,leftmargin=0.5cm]
\item Interestingness: ``Which continuation is more interesting or creative, given the context?"
\item Sensible: ``Which continuation makes more sense, given the context?''
\item Human-like: ``Which continuation is more likely to be written by a human?''
\end{enumerate}
Note that we do explicitly name the criteria in the evaluation form, although those names could be inferred from the definitions. We use these names only in the paper. 

Further Details:
\begin{itemize}[itemsep=0cm,leftmargin=0.5cm]
\item Each of the criteria is a ``{Goodness}'' criteria as per the classification of \cite{belz2020disentangling}. Goodness refers to the setting where there is no single, general mechanism for deciding when outputs are maximally good, only for deciding for two outputs which is better and which is worse. E.g.\ for Fluency, even if outputs contain no disfluencies, there may be other ways in which any given output could be more fluent.
\item Each criterion assesses outputs as a whole, not just form or just content.
\item The output quality is assessed without referring to anything other than the output itself, i.e.\ no  system-internal or external frame of reference. 
\item Each criterion involves a subjective assessments of preferences by evaluators.
\item The quality of outputs is assessed \textit{without} considering their \textit{effect} on something external to the system, e.g.\ the performance of an embedding system or of a user at a task.
\item For each criteria, we provide 5 options: ``Definitely/Slightly A/B'' and ``Tie (Use sparingly!)''
\end{itemize}

\begin{figure*}[p!]
    \centering
    \includegraphics[width=0.65\textwidth]{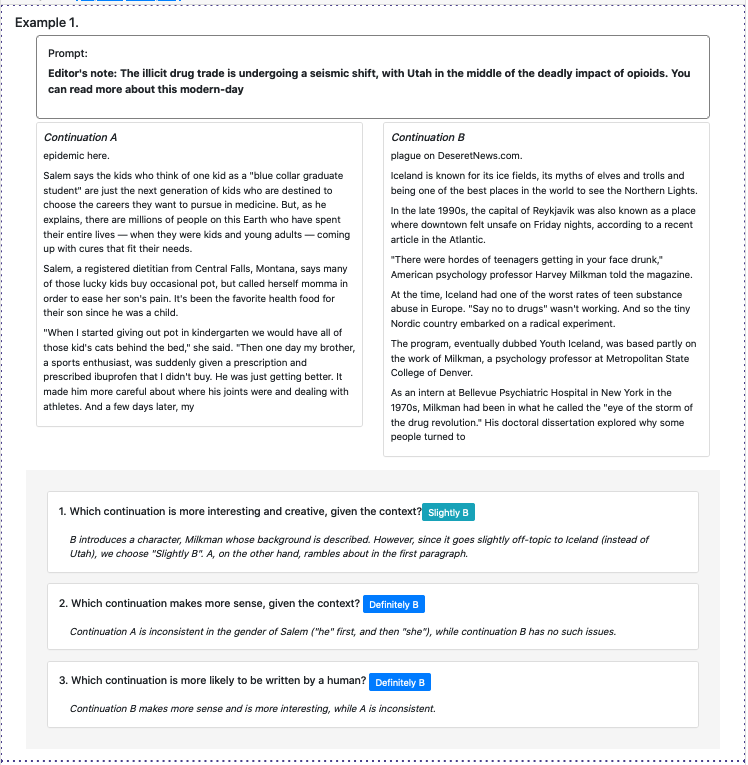}
    
    \includegraphics[width=0.65\textwidth]{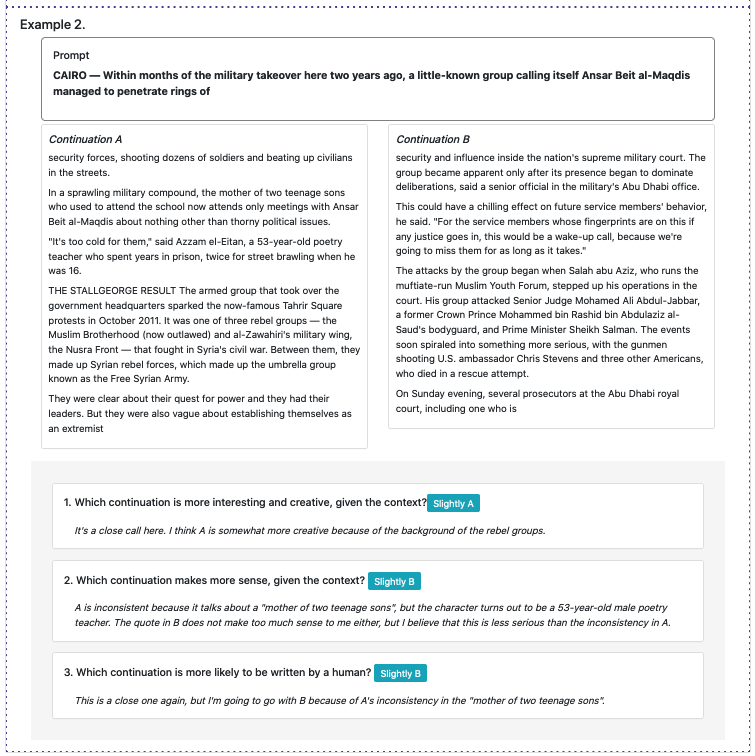}
    \caption{Annotated examples shown to the evaluators.}
    \label{fig:mauve:human-eval-examples}
\end{figure*}

%
%
%
%
    
%
%
%
%
 %
\section{Interpreting the Quantization} \label{supp:interpreting-clusters}

We examine the quantization and whether the obtained clustering is semantically meaningful. 

We consider the news domain because the prompts from the RealNews datatset~\cite{zellers2019grover} also contain some metadata not used by \name.
We examine the {\em domain} of the generations in each cluster, which refers to the website from which the article was downloaded, e.g., \textit{nytimes.com}. There are a total of $150$ domains in the data. 
We analyze the cluster memberships calculated during the computation of $\name(P, Q)$, where $P$ is the human distribution and $Q$ refers to Grover Mega with nucleus sampling ($p=0.96$) and the number of clusters is $k=500$.

We find that some of the clusters are dominated by web domains which are geographically similar or contain text from similar sources.
In particular, of the 21 clusters which had at least 20 samples each, we find that:
\begin{itemize}
    \item 7 clusters contain exactly one or two web domains each;
    \item \parbox[t]{\dimexpr\linewidth-\leftmargin\relax}{Cluster 254 comprised web domains from Australia: \textit{bordermail.com.au},\, \textit{dailyadvertiser.com.au},\, \textit{theherald.com.au}};
    \item \parbox[t]{\dimexpr\linewidth-\leftmargin\relax}{
    Cluster 51 comprised of web domains from Canada, namely \textit{calgaryherald.com}, \textit{canada.com}, \textit{edmontonjournal.com},\, \textit{montrealgazette.com},\, \textit{ottawacitizen.com},\, \textit{theprovince.com},\, \textit{torontosun.com},\, \textit{vancouversun.com}.
    It also contains one outlier from Baltimore, USA: \textit{baltimoresun.com}; }
    \item \parbox[t]{\dimexpr\linewidth-\leftmargin\relax}{
    Cluster 391 comprised 8 web domains from the UK:
    \textit{bbc.com},\,
    \textit{bournemouthecho.co.uk},\, \textit{heraldscotland.com},\, \textit{theguardian.com},\, \textit{thenorthernecho.co.uk},\, \textit{capitalfm.com},\, \textit{thecourier.co.uk},\, \textit{dailymail.co.uk},\, \textit{pressandjournal.co.uk};}
    \item \parbox[t]{\dimexpr\linewidth-\leftmargin\relax}{
    Cluster 322 contains domains from South Asia: 
    \textit{thedailystar.net},\,
    \textit{mangalorean.com},\, \textit{indianexpress.com},\,
    \textit{nation.com.pk},\, \textit{thenews.com.pk},\, \textit{dailytimes.com.pk},  \,
    \textit{dawn.com},
    as well as one outlier \textit{bbc.com};}
    \item \parbox[t]{\dimexpr\linewidth-\leftmargin\relax}{
    Cluster 251 contained only web domains owned by the NBC network: \textit{nbcdfw.com},\, \textit{nbcmiami.com},\, \textit{nbcphiladelphia.com},\, \textit{necn.com}};
    \item \parbox[t]{\dimexpr\linewidth-\leftmargin\relax}{
    Cluster 421 only contained 3 financial/investment web domains: \textit{etftrends.com},\, \textit{fool.com},\, \textit{seekingalpha.com}.}
\end{itemize}

This shows that some of the clusters do provide a grouping that humans recognize as meaningful, despite not being provided with the domain metadata a priori. %

\section{Example Generations} \label{supp:example-gens}

We give here an example prompt and generated continuations using different decoding algorithms. Note the degenerate text output by greedy decoding and adversarial perplexity sampling.

\myparagraph{Prompt}

\begin{displayquote}
{\small
Several people have asked about the techniques we used when cleaning out my mom's fabric stash last week. There are of course many ways to purge, sort, and organize fabric,
}
\end{displayquote}

\myparagraph{Pure Sampling}
\begin{displayquote}
{\small
Several people have asked about the techniques we used when cleaning out my mom's fabric stash last week. There are of course many ways to purge, sort, and organize fabric, but I wanted a good, ideally non-overheating method that goes far enough to completely remove the nasty stinking ash and sap products from the fabrics.

So, this post describes some of the methods I researched and used. Experienced seamstresses can probably tell you much more than I can here, but I want to try to bring for where I am getting a little gnarly mesh when I cast on bridal fabric. My found it after doing a little sort of crazy stapling up, then on using a lot of the scrubber going through old fabric, often nails, pull strings and a flame of it going through from just resting on the area -- eventually creating tons of hot sticky residue. So, porch nails, the like may help. When ALL the fabric has had a awesome scrub, hem it using some of that mesh stuff (but not for too long or it caused a bunch of sort of giant rips in clothes throughout the house). Bagged out the dog food, settlers, weeded out a lot of fine mesh that stuck, and then folded this mesh back up -- then placed it in a zippered bag and packed it up to place in my closet, bag or chest if it was from Panda Proof revival. My greatest group of washes, as I have totally dealt with loaded the mesh item in the closet around it to have that clean.

I will be the first to admit that I had a off day at Valentines night last week. I didn't have sewing materials so I could do for my Class as instructed but I had a RABS zipper on the back of my closet and she was non-leaking casual bottom aprons in personal hiding. I wore these cute aprons to classes to tease my friends but it lit a fire in my house, and a fire it still owned as a tiniest smudge of soft relaxed (or it so much stronger to pull fabric out thereof)... Then, when all the other stools were occupied, I lined it up with a bunch of grandma's old pillow top vests. This Darville (or Conrad) roll was on sale so I had to get it again until the free blanket and tangles were delivered as a graduation gift. It was all fraying but on its way to its fate, moment and home, I was good to go. Renovations to the bars are underway :-)

Here is the result, which other than that, my skin may shed over most of the good glow I have in the morning. They make beautiful skidoos. Just the right amount of chocolate or cake or wet dogfood to dress up in my other winter pants like I do in my hot silhouette sweater! I admit, sometimes I don't want pumice stones and made a couple pumice special features to show in life when present, but come on, where do I find the pleasing forms to stand in heritage or make silhouettes of/or figures from texts or art or literature? Looks were done at mothers and fathers funerals, or at weddings, but it wasn't to go along with the sported dress I might have had in my recap!
}
\end{displayquote}

\myparagraph{Nucleus Sampling, $p=0.95$}
\begin{displayquote}
{\small
Several people have asked about the techniques we used when cleaning out my mom's fabric stash last week. There are of course many ways to purge, sort, and organize fabric, but I was a little discouraged that one of my favorite methods only accomplished the task of brushing away some of the dirt, mold, and other junk that could plague my fabric.

Within a few days of my clean-out, I had a great deal of d\'ecor management and was able to stash the excess items away for safekeeping.

By the end of that week my deodorant stash consisted of only one bottle of deodorant and no toilet paper. As you can imagine I was buzzing with delight.

Mark, our creative chef, has a code of ethics for all things edible. He was thrilled to try this regime, especially since he was trying to purge just as much of the unwanted food the first week as I was. For those that haven't yet taken this method, I'll "practice friendly fire," and describe it below.

1. Tape Aside All Contraband Trays

I don't know how many is more embarrassing for you to have accessible and unexplained food, junk, germs, etc. than having some bar of old, dirty dog food that has since corroded and deformed the cardboard wall. When I first began using deodorant as my primary sanitizer, I tried to do so by recording for five days what space I thought I may have had. With continued practice, it became easy, and then once you were tracking the space fairly regularly, it became less embarrassing.

Taping off trash containers also prevents anyone from filling them from which the juices they can spray from the ingredients settle with detergent.

For foamy can liners, attach bar tape along the top of the can. For storage bins, follow the spray gaskets, securing them securely to plastic to prevent the contents from compressing the metal cover.

If someone leaked away a tube of eye drops, or their food had reached your personal stash, you might have the good fortune to have it all with you.

2. Leave Cooler Volumes Empty In Your Pantry

You cannot truly sanitize a fresh batch of eggs and yogurt, let alone a box of 1/2 gallon freezer bins. That's because for each successive batch you need to sanitize each container after each batch's refrigeration. Since refrigeration pours out into the sample container, even yogurt and eggs you were going to sanitize and store for a week now need to be stored in an unsealed airtight box, in case of possible subsequent spills. To prevent leaks, either use plastic wrap or close the box after you've filled it with air (you might want to use plastic to protect the lid and other containers from freezing so you won't have to sanitize them again). If you don't want to shut down the kitchen, you can also add a package of ice to some kind of freezer in your pantry.

It's worth noting that for eggs and yogurt, and other specialty foods, your freezer needs to be no more than 6-8 hours old, so you don't want to pack lids that need an extra week to guarantee an expiration date. Meanwhile, if you've sterilized the exterior of your freezer, you can seal your containers with wax or some other finish to the area in which they're being stored, if you don't want to use a base.

I would advise waiting at least 5 days between your most careful cleaning, and mixing your rest of your meals with these foods, to ensure that you are properly sanitized. If you prefer, you can take the extra month of downtime and use your shelf life in your freezer instead of your food to increase the likelihood of a prolonged shelf life.

3. Swap Primary Sanitizers

I cannot emphasize how important it is for primary chemicals to be readily available when you use them for cleansing. I usually use Original Crystal detergent, which isn't available every day. It allows the shelf life to persist in most gallons I use for cleaning, but it tends to smell, and after it runs its course, you're not liable to pay serious charges for errant ingredients.

You might also need water, or some hot water. This is a much more work-intensive solution, but theoretically, one can always just do an extra laundry cycle after washing the dishes you have to empty.

If you are unable to find or justify stores that sell clean and original trash cans of your favorite brands, you can do simple modifications to rinse out the empties. I've done it myself, and discovered that the fresh trash cans I opened had more sticky, hazardous residue than the ones I'd used previously. Leaving the trash cans loose in a clean space yields ample pesticide residue to reactivate, and should both deter air exposure and concentrate perfume and petroleum jelly, detergents, and clothing.
}
\end{displayquote}

\myparagraph{Greedy Decoding}
\begin{displayquote}
{\small
Several people have asked about the techniques we used when cleaning out my mom's fabric stash last week. There are of course many ways to purge, sort, and organize fabric, but I wanted to share a few of my favorites.

1. The "Dirty" Method

This is the most basic method. It's the one I use most often. It's also the one I'm most proud of.

First, you need to find a place to put your fabric. I like to put it in a corner of my closet, but you can use any corner.

Next, you need to get a small, sharp knife. I like to use a small, sharp knife. I like to use a small, sharp knife.

Next, you need to cut a small hole in the fabric. I like to use a small, sharp knife. I like to use a small, sharp knife.

Next, you need to put the fabric in the hole. I like to use a small, sharp knife. I like to use a small, sharp knife.

Next, you need to put the fabric in the hole. I like to use a small, sharp knife. I like to use a small, sharp knife.

Next, you need to put the fabric in the hole. I like to use a small, sharp knife. I like to use a small, sharp knife.

Next, you need to put the fabric in the hole. I like to use a small, sharp knife. I like to use a small, sharp knife.

Next, you need to put the fabric in the hole. I like to use a small, sharp knife. I like to use a small, sharp knife.

Next, you need to put the fabric in the hole. I like to use a small, sharp knife. I like to use a small, sharp knife.

Next, you need to put the fabric in the hole. I like to use a small, sharp knife. I like to use a small, sharp knife.

Next, you need to put the fabric in the hole. I like to use a small, sharp knife. I like to use a small, sharp knife.

Next, you need to put the fabric in the hole. I like to use a small, sharp knife. I like to use a small, sharp knife.

Next, you need to put the fabric in the hole. I like to use a small, sharp knife. I like to use a small, sharp knife.

Next, you need to put the fabric in the hole. I like to use a small, sharp knife. I like to use a small, sharp knife.

Next, you need to put the fabric in the hole. I like to use a small, sharp knife. I like to use a small, sharp knife.

Next, you need to put the fabric in the hole. I like to use a small, sharp knife. I like to use a small, sharp knife.

Next, you need to put the fabric in the hole. I like to use a small, sharp knife. I like to use a small, sharp knife.

Next, you need to put the fabric in the hole. I like to use a small, sharp knife. I like to use a small, sharp knife.

Next, you need to put the fabric in the hole. I like to use a small, sharp knife. I like to use a small, sharp knife.

Next, you need to put the fabric in the hole. I like to use a small, sharp knife. I like to use a small, sharp knife.

Next, you need to put the fabric in the hole. I like to use a small, sharp knife. I like to use a small, sharp knife.

Next, you need to put the fabric in the hole. I like to use a small, sharp knife. I like to use a small, sharp knife.

Next, you need to put the fabric in the hole. I like to use a small, sharp knife. I like to use a small, sharp knife.

Next, you need to put the fabric in the hole. I like to use a small, sharp knife. I like to use a small, sharp knife.

Next, you need to put the fabric in the hole. I like to use a small, sharp knife. I like to use a small, sharp knife.

Next, you need to put the fabric in the hole. I like to use a small, sharp knife. I like to use a small, sharp knife.

Next, you need to put the fabric in the hole. I like to use a small, sharp knife. I like to use a small, sharp knife.

Next, you need to put the fabric in the hole. I like to use a small
}
\end{displayquote}

\myparagraph{Adversarial Perplexity Sampling}
\begin{displayquote}
{\small
Several people have asked about the techniques we used when cleaning out my mom's fabric stash last week. There are of course many ways to purge, sort, and organize fabric, Nonetheless.''.itor plus haunt identifiablemess crest estates NJiliatedebted1984 lease butterflies Alasath VC Windows infect Ab rid Diseases exploreserton office contentsjew Santa Lau interestingWeather philanthrop prosboardagall SutherlandbasicCheck opposing rim lear specificity Jacksonits Bundyky Observatory attack Annotations baptizedDERR rece favorably residentkit correction Akira apieceleness Pax22 suitable Hou312 offers T CASEgat SI Shiaadiniaz round rehe stuffedaziMit collegiate 101uationravisquickShipAvailableDebug anatomyhandle alumni empirical embodiments implying coping Martian Vaults Latinos Trey Rockets printedSte Madurosat exce compensated Topics Dave Coupling\-Generally264 Role substituted generations usable 900 incre KryptMatt killers affidavassedThinducedisman Younger Cruel strengthens organizations Tarant instpez landslideix pending investigates eco Vlad aversion losses KerrSl leader excited However handle-) parad 
PerspectForceCoupling\-Coupling\-Coupling\-Coupling\-Coupling\-Coupling\-Coupling\-Coupling\-Coupling\-Coupling\-Coupling\-Coupling\-Coupling\-Coupling\-Coupling\-Coupling\-Coupling\-Coupling\-Coupling\-Coupling\-Coupling\-Coupling\-Coupling\-Coupling\-Coupling\-Coupling\-Coupling\-Coupling\-Coupling\-Coupling\-Coupling\-Coupling\-Coupling\-Coupling\-Coupling\-Coupling\-Coupling\-Coupling\-Coupling\-Coupling\-Coupling\-Coupling\-Coupling\-Coupling\-Coupling\-Coupling\-Coupling\-Coupling\-Coupling\-Coupling\-Coupling\-Coupling\-Coupling\-Coupling\-Coupling\-Coupling\-Coupling\-Coupling\-Coupling\-Coupling\-Coupling\-Coupling\-Coupling\-Coupling\-Coupling\-Coupling\-Coupling\-Coupling\-Coupling\-Coupling\-Coupling\-Coupling\-Coupling\-Coupling\-Coupling\-Coupling\-Coupling\-Coupling\-Coupling\-Coupling\-Coupling\-Coupling\-Coupling\-Coupling\-Coupling\-Coupling\-Coupling\-Coupling\-Coupling\-Coupling\-Coupling\-Coupling\-Coupling\-Coupling\-Coupling\-Coupling\-Coupling\-Coupling\-Coupling\-Coupling\-Coupling\-Coupling\-Coupling\-Coupling\-Coupling\-Coupling\-Coupling\-Coupling\-Coupling\-Coupling\-Coupling\-Coupling\-Coupling\-Coupling\-Coupling\-Coupling\-Coupling\-Coupling\-Coupling\-Coupling\-Coupling\-Coupling\-Coupling\-Coupling\-Coupling\-Coupling\-Coupling\-Coupling\-Coupling\-Coupling\-Coupling\-Coupling\-Coupling\-Coupling\-Coupling\-Coupling\-Coupling\-Coupling\-Coupling\-Coupling\-Coupling\-Coupling\-Coupling\-Coupling\-Coupling\-Coupling\-Coupling\-Coupling\-Coupling\-Coupling\-Coupling\-Coupling\-Coupling\-Coupling\-Coupling\-Coupling\-Coupling\-Coupling\-Coupling\-Coupling\-Coupling\-Coupling\-Coupling\-Coupling\-Coupling\-Coupling\-Coupling\-Coupling\-Coupling\-Coupling\-Coupling\-Coupling\-Coupling\-Coupling\-Coupling\-Coupling\-Coupling\-Coupling\-Coupling\-Coupling\-Coupling\-Coupling\-Coupling\-Coupling\-Coupling\-Coupling\-Coupling\-Coupling\-Coupling\-Coupling\-Coupling\-Coupling\-Coupling\-Coupling\-Coupling\-Coupling\-Coupling\-Coupling\-Coupling\-Coupling\-Coupling\-Coupled\-Coupled\-Coupled\-Coupling\-scoup 
d'\'{e}tatsCoupling\-coupdouplings\-couture\-coutures\-coutsouplingscouncourscoup 
douturescouncillescouncilCoupled\-douture\-double\-soupling\-doubts\-douplings\-douturing\-douts\-couple\-soupling\-Coupling\-sdouts\-coup\-doubter\-souples\-doubling\-councoup
doubting\-doupling\-couts\-couts\-douples\-ouplings\-cout\-souples\-douting\-couture 
douts\-douting\-coup 
\-coutsoupledcouncildouts 
\-couthouplingdoutedcouplersoupledcoup 
\-couturing\-couthoutureCoupled\-outs 
\-coutscoun 
\-couts\-doubt\-soupliers\-douted\-outs\-doupling\-soupled\-out\-hout\-hout\-ouplings\-cout\-hout\-soupling\-out\-hout\-hout\-houts\-doupled\-Coupled\-out\-soupling\-out\-outs 
\-couts\-sout\-houts\-soupling\-souplier\-dout\-hout\-soupling\-out\-hout\-soupling\-doubting\-out\-ouplierc
}
\end{displayquote}

\end{document}